\crefname{prop}{Proposition}{Propositions}
\Crefname{prop}{Proposition}{Propositions}
\declaretheoremstyle[
  headfont=\normalsize\bfseries,
  bodyfont=\normalsize,
  notefont=\normalsize\itshape,
  notebraces={(}{)},
]{propstyle}
\DeclareMathOperator*{\argmin}{argmin\,}
\DeclareMathOperator*{\argmax}{argmax\,}
\newcommand\norm[1]{\lVert#1\rVert}
\def\x{{\mathbf x}}
\def\y{{\mathbf y}}
\def\s{{\mathbf s}}
\def\I{{\mathbf I}}
\def\f{{\mathbf f}}
\def\w{{\mathbf w}}
\def\cN{{\cal N}}
\def\cA{{\cal A}}
\def\cH{{\cal H}}
\def\E{{\mathbb E}}
\def\C{{\mathbb C}}
\def\P{{\mathbf P}}
\title{Learning Optimized Sampling Patterns for Accelerated MRI Reconstruction via Diffusion Models}
\author[a]{Sriram Ravula}
\author[a]{Brett Levac}
\author[a,b,*]{Yamin Arefeen}
\author[c]{Ajil Jalal}
\author[c]{Alexandros G.\ Dimakis}
\author[a,d,e]{Jonathan I.\ Tamir}
\affil[a]{Chandra Family Department of Electrical and Computer Engineering, The University of Texas at Austin, Texas, USA}
        \affil[b]{Imaging Physics, MD Anderson Cancer Center, Houston, Texas, USA}
\affil[c]{Electrical Engineering and Computer Sciences, University of California, Berkeley, California, USA}
\affil[d]{Oden Institute for Computational Engineering and Science, The University of Texas at Austin, Texas, USA}
\affil[e]{Department of Diagnostic Medicine, Dell Medical School, Austin, Texas, USA}
\begin{document} 
\maketitle

\begin{abstract}
\newline \textbf{Purpose:} Magnetic resonance imaging (MRI) is a powerful medical imaging modality, but long acquisition times limit throughput, patient comfort, and clinical accessibility. Diffusion-based generative models serve as strong image priors for reducing scan-time with accelerated MRI reconstruction and offer robustness across variations in the acquisition model. However, most existing diffusion-based approaches do not exploit the unique ability in MRI to jointly design both the sampling pattern and the reconstruction method. While prior learning-based approaches have optimized sampling patterns for end-to-end unrolled networks, analogous methods for diffusion-based reconstruction have not been established due to the computational burden of posterior sampling. In this work, we propose a method to optimize k-space sampling patterns for accelerated multi-coil MRI reconstruction using diffusion models as priors.
\newline \textbf{Approach:} We introduce a training objective based on a single-step posterior mean estimate that avoids backpropagation through an expensive iterative reconstruction process. Then we present a greedy strategy for learning Cartesian sampling patterns that selects informative k-space locations using gradient information from a pre-trained diffusion model while enforcing spatial diversity among samples. 
\newline \textbf{Results:} Experimental results across multiple anatomies and acceleration factors demonstrate that diffusion models using the optimized sampling patterns achieve higher-quality reconstructions in comparison to using fixed and learned baseline patterns.
\newline \textbf{Conclusions:} This work proposes a method to learn sampling patterns for accelerating MRI with diffusion-based generative models and posterior sampling reconstruction.
\end{abstract}

\keywords{MRI, generative AI, diffusion models, optimal sampling, compressed sensing}

{\noindent \footnotesize\textbf{*}Corresponding author: Yamin Arefeen,  \linkable{yamin.arefeen@austin.utexas.edu} }

\newcommand{\fix}{\marginpar{FIX}}
\newcommand{\new}{\marginpar{NEW}}

\begin{spacing}{2}   

\section{Introduction}
\singlespacing
Magnetic Resonance Imaging (MRI) is a powerful and flexible medical imaging modality, providing excellent soft-tissue contrast without ionizing radiation. However, long acquisition times remain a fundamental limitation, contributing to reduced clinical accessibility, elevated healthcare costs, and extensive patient discomfort. Accelerating MRI acquisitions by reconstructing diagnostic quality images from shorter, under-sampled acquisitions has therefore been a central objective of MRI research for several decades. Parallel imaging, a widely adopted strategy to accelerate MRI, exploits the spatial sensitivity variation of multi-coil receive arrays to reconstruct missing k-space samples \cite{smash,sense,grappa}. Compressed Sensing (CS)  has also been used to accelerate MRI beyond the Nyquist rate by sampling a pseudo-random subset of Fourier coefficients in k-space and imposing a sparse prior on the image in some transform domain \cite{cs,donoho_cs,lustig2007sparse}. Parallel imaging and CS can also be combined, but these two methods impose competing requirements on the sampling pattern: compressed sensing favors incoherent, pseudo-random sampling, while parallel imaging relies on structured sampling that preserves local k-space neighborhoods to enable linear predictability \cite{haldar}. 

Recently, deep learning has achieved state-of-the-art performance in accelerated MRI reconstruction compared to classical parallel imaging and sparsity-based methods. These methods include unrolled techniques \cite{aggarwal2018modl, deepjsense, hammernik}, as well as approaches leveraging generative models \cite{deepdecoder,robustmri,chulscore,uecker_diffusionMRI} (see Heckel et al.\cite{Heckel2024DeepLF} for a recent survey). Besides the reconstruction algorithm, the acquired measurements in MRI can also be chosen by the user. Researchers have proposed many deep learning techniques to optimize the sub-sampling pattern for accelerated MRI \cite{wei2022, sparkling, weiss2021pilot, Alkan_2024, bjork, JMODL_2020, zibetti}, leading to improved reconstruction quality over hand-crafted or heuristic patterns. However, these approaches typically optimize a reconstruction network along with the sampling pattern in a joint fashion \cite{UNet_LOUPE}. This joint learning scheme is inflexible, requiring a new pattern and reconstruction network to be trained for each new set of imaging conditions. In addition, these approaches require the gradient of the sampling operator to be calculated through the full reconstruction process, making it unclear how to extend them to methods like CS with generative models and diffusion posterior sampling \cite{csgm, robustmri}.

In this work, we propose a novel method to optimize the acquired samples for MRI reconstruction using posterior sampling with generative diffusion models \cite{robustmri,chulscore,uecker_diffusionMRI,songmri}, which has been shown to be robust to changes in imaging anatomy, acceleration factor, and sampling patterns, without requiring retraining. One of our main challenges is that posterior sampling can require tens or hundreds of iterations, making it difficult to unroll or invert the reconstruction algorithm to propagate training gradients for updating our sampling pattern. We therefore present a method to tackle the issue of obtaining gradients from posterior sampling and introduce an approach for learning sampling patterns using these gradients. Specifically, we prove a theoretical extension to Tweedie's formula that incorporates both the learned prior from the diffusion model and the MRI measurements, then use this result to propose a simple and effective training objective that allows us to update our sampling operator using gradients calculated with just a single reconstruction step, avoiding the computational and memory burden associated with backpropagating through a full posterior sampling process. We then propose to learn sampling patterns in an iterative greedy process by selecting candidate k-space samples based on their gradient values and distance away from existing samples. We apply our approach to multi-coil MRI with a Cartesian acquisition scheme and demonstrate that our optimized sampling patterns lead to high-quality reconstructions across multiple anatomies and acceleration factors. Fig. \ref{fig:brain_patterns} showcases optimized sampling patterns for diffusion based reconstructions enabled by our approach.
\begin{figure}[!t]
    \centering
    \includegraphics[width=0.99\linewidth]{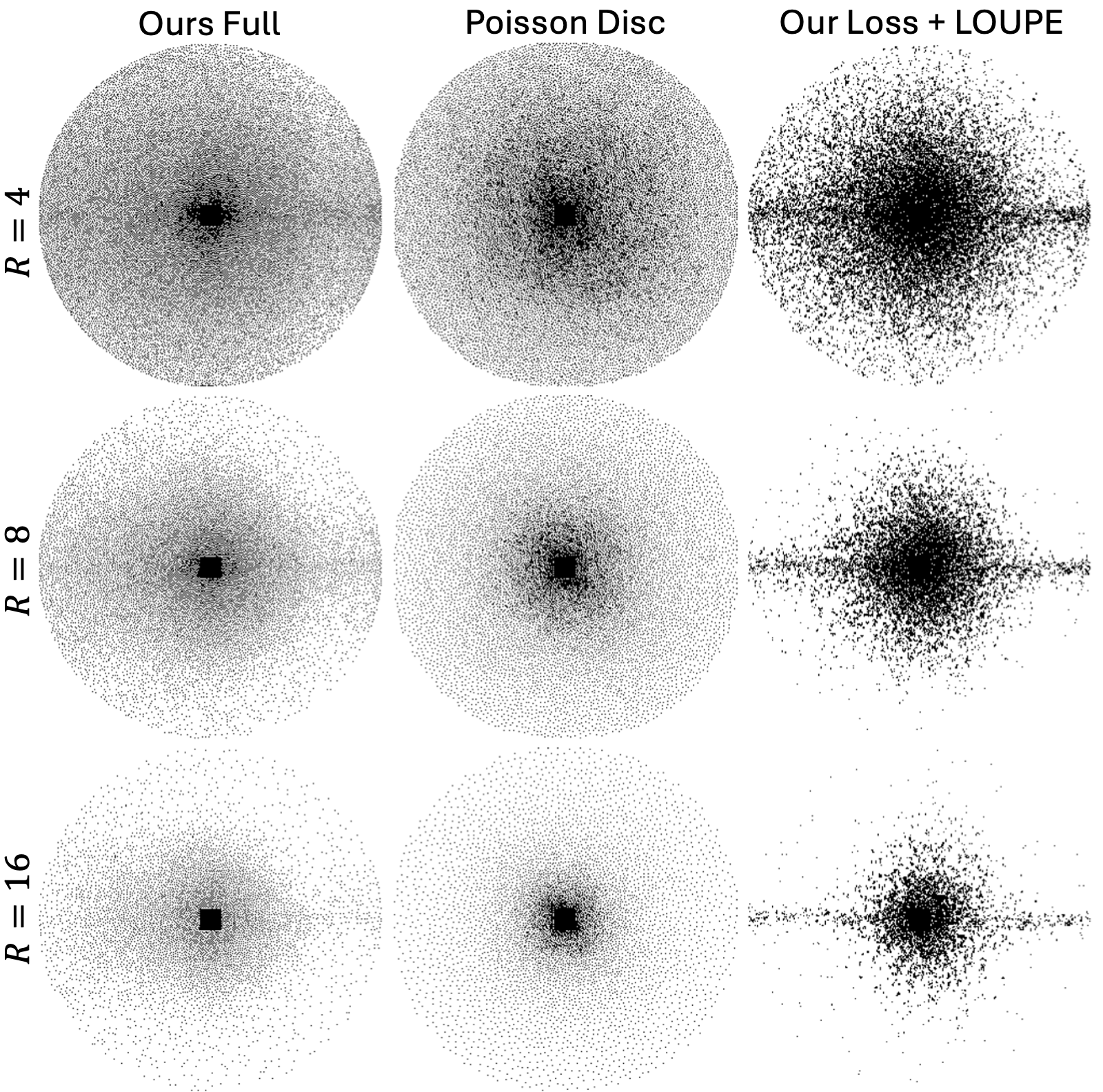}
    \caption{\textbf{Sampling patterns from different methods for T2 brain data.} Optimized sampling patterns from our full proposed method used for reconstructing slices at acceleration factors $R$ in $\{4, 8, 16\}$ with the diffusion-based methods. Poisson Disc sampling tends to distribute samples more uniformly at high acceleration rates, while LOUPE concentrates many samples near the center of k-space. The proposed method balances these effects by maintaining spacing between samples while preferentially allocating measurements to informative central regions.}
    \label{fig:brain_patterns}
\end{figure}

We summarize our contributions below:
\begin{itemize}
    \item We propose a novel training objective for selecting which samples to obtain for MRI reconstruction using diffusion-based generative models. Our method is based on a theoretical result which provides a closed-form solution for calculating the mean of the posterior distribution.
    \item We use pre-trained diffusion models as priors without requiring further tuning, making our method modular. 
    \item We train Cartesian sampling patterns with a greedy update by selecting informative samples based on their gradient values while ensuring that chosen samples are not clustered closely together. 
    \item Empirical results show that diffusion-based posterior sampling achieves higher reconstruction quality when applying our optimized sampling pattern in-comparison to baseline sampling patterns across a range of anatomy and acceleration rates.
\end{itemize}

\section{Related Work}

\subsection{Compressed Sensing with Generative Models} 

The theory of compressed sensing \cite{cs, donoho_cs} demonstrates that a signal can be accurately recovered from measurements taken below the Nyquist rate, given that the signal is sparse in an appropriately-chosen basis. The authors of CSGM \cite{csgm} extended this finding by showing that the range of a generative model is a more powerful prior than sparsity. Generative models have been used as priors for inverse problems in deblurring \cite{jaydeblurring}, phase retrieval \cite{phase_retrieval}, 1-bit compressed sensing \cite{1bit}, and many more. Various algorithms have been proposed to leverage generative models for these reconstruction tasks \cite{ajilicml21, ddrm, song2021scorebased}, including methods which use untrained models \cite{dip, deepdecoder} or models trained entirely on compressive measurements \cite{ambientgan, aali2024ambient}. See the survey by Ongie et al.\cite{imaging_survey} for a taxonomy of image reconstruction algorithms and applications. 

\subsection{MRI Sampling Pattern Selection}

Previous works for learning optimal sampling patterns for accelerated MRI, broadly speaking, parameterize the sampling pattern as either Cartesian or Non-Cartesian.

Cartesian techniques learn to sample on a uniformly spaced, discrete grid of points in k-space, but this parameterization presents a challenging combinatorial optimization problem with an enormous search space: for a fixed acceleration factor $R=\frac{m}{n}$, there are $\binom{n}{m}$ possible patterns to choose from. The authors of LOUPE \cite{UNet_LOUPE, multicoil_loupe} relax the problem by parameterizing continuous logits underlying independent Bernouli random variables at each k-space location. They train the logits jointly with a reconstruction network in an end-to-end fashion using gradient descent. Alternative approaches use subset selection-based algorithms and avoid differentiating w.r.t. the pattern, selecting sampling points based on their contribution to the error \cite{zibetti, zibetti2}, or variance \cite{wang2023adaptive} if fully-sampled k-space is not available. 

Non-Cartesian methods exploit the fact that MRI is not restricted to grid-based sampling schemes. These methods use a version of the MRI forward operator based on the non-uniform FFT (NuFFT), which is evaluated at a given set of coordinates $\phi$ and is continuous and differentiable w.r.t. these coordinates. Some methods optimize directly over the coordinates \cite{Alkan_2024}, some parameterize sampling trajectories over the coordinates using curves or lines \cite{bjork, sparkling, JMODL_2020}, and others use a hybrid approach by first learning coordinates and then optimizing a trajectory through those coordinates \cite{weiss2021pilot}. Additional work has been done to enforce physical constraints on the learned sampling pattern to avoid undesirable effects from the imaging process, such as peripheral nerve stimulation (PNS) \cite{SNOPY}.

Our work focuses on learning Cartesian sampling patterns since they are more widely used in practice than non-Cartesian patterns and offer a straightforward imaging model that leverages highly-optimized FFT implementations \cite{wright2014non}. Unlike the majority of previous methods that learn Cartesian patterns by optimizing a sampling density \cite{UNet_LOUPE, sherry}, our approach produces a single sampling pattern instance that accounts for redundancy and interdependence between k-space locations. The majority of prior approaches also jointly optimize the sampling pattern with a reconstruction network, leading to a ``single-use'' framework that is brittle to shifts in the sampling scheme or image distribution at inference. In contrast, our method leverages a powerful pre-trained diffusion model with frozen weights that is not tied to any sampling scheme, can be re-used to learn multiple patterns, and is robust to distribution shifts at inference \cite{ajilicml21}.

Our method is most similar to the method presented in \cite{wang2023adaptive}, which uses a score-based model to learn an adaptive Cartesian pattern by sampling multiple reconstructions at each iteration using posterior sampling and greedily adding the k-space points with the highest variance over the reconstructions. Our method differs in that we focus on the non-adaptive setting where fully-sampled reference data are available offline during training. Further, we propose an objective that offers a simple alternative to full posterior sampling during training, considerably reducing computational costs. Note, portions of this work served as material for Ravula's Dissertation \cite{sriramThesis}.

\section{Background}

\subsection{MRI Reconstruction}
\label{sec:MRI}

A common approach for accelerating MRI scans is to acquire fewer measurements, which leads to an ill-posed inverse problem that must be solved to reconstruct the image. The measurement process for multi-coil MRI can be expressed as follows:
\begin{equation}
\label{eqn:MRI_forw}
    \mathbf{y}_i = \mathbf{PFS}_i \x_0 + \bm{\epsilon}_i,
\end{equation}
where $\mathbf{y}_i\in \C^m$ are the measurements in the spatial frequency domain (or \emph{k-space}) for the $i^{th}$ coil, $\x_0\in \C^n$ is the image of interest, $\mathbf{S}_i\in \C^{n\times n}$ is the coil sensitivity map for the $i^{th}$ coil ($c$ coils in total), $\mathbf{F}\in \C^{n \times n}$ is the Fourier transform matrix, $\mathbf{P} \in \C^{m \times n}$ is a binary sub-sampling operator with rows taken from the $n \times n$ identity matrix, and $\bm{\epsilon}_i \in \C^{m}$ is i.i.d Gaussian noise. We define the \emph{acceleration factor} as the under-sampling ratio given by $R \coloneqq m / n$. We note that even if there are many coils and $m \times c \geq n$, the inverse problem may still be ill-posed due to spatial correlations in the coil sensitivity maps. We denote $\mathbf y$ to be the multi-coil k-space measurements and $\cA$ the total multi-coil MRI measurement model. 

\subsection{Diffusion-Based Generative Models}
\label{sec:diffusion}

Diffusion models learn to generate signals by reversing a corruption process, specifically by removing additive Gaussian noise, where the noise magnitude is controlled by the variance $\sigma^2_t$ at each time step $t \in [0, T]$. Typically, $\sigma^2_0 = 0$ corresponds to clean signals, while $\sigma^2_T$ is large enough that the noisy signals become indistinguishable from pure Gaussian noise. Song et al. \cite{song2021scorebased} unify Score-Based Models (SBMs) \cite{song2019generative} and Denoising Diffusion Probabilistic Models (DDPMs) \cite{ho2020denoising} under continuous-time Stochastic Differential Equations (SDEs), interpreting SBMs as reversing \emph{Variance Exploding} (VE) SDEs and DDPMs as reversing \emph{Variance Preserving} (VP) SDEs. We focus on the VE-SDE framework.

The noisy signal at time $t$ in the forward diffusion process is given by $\x_t$. The diffusion process is modeled as the solution to an It\^o SDE of the form
\begin{equation}
\label{eq:forward_ve}
    d\x = \f(\x, t)dt + g(t)d\w.
\end{equation}
Here, $\w$ is the standard Wiener process, and in the VE case, $\f(\x, t) = \mathbf{0}$ and $g(t) = \sqrt{\frac{d\sigma_t^2}{dt}}$. The variance $\sigma^2_t$ is a monotonically increasing function that defines the distribution of the diffused signal, with the property that at time $t=0$ we recover the data distribution: $\x_0 \sim p_{0} = p_{data}$. 

The goal of diffusion-based models is to start from samples $\x_T \sim p_T$ consisting of pure Gaussian noise and reverse the forward diffusion given by Eq.~\eqref{eq:forward_ve} to arrive at samples $\x_0 \sim p_0$. Conveniently, the reverse of the forward SDE is also an SDE \cite{ANDERSON1982313}, with the form:
\begin{equation}
\label{eq:reverse}
    d\x = [\f(\x, t) - g(t)^2 \nabla_{\x_t} \log p_t(\x_t)]dt + g(t)d\Bar{\w},
\end{equation}
where $dt$ is now a negative time step and $\Bar{\w}$ is the standard Wiener process when time flows backward. The reverse SDE depends on the \emph{score function} $\nabla_{\x_t} \log p_t(\x_t)$ of the marginal distribution at time $t$. 

A result from Vincent \cite{vincent2011connection} states that we can learn the \emph{unconditional} score function in Eq.~\eqref{eq:reverse} through denoising score matching (DSM). DSM only requires the score of the \emph{conditional} noise distribution at time $t$, which is known in closed form for additive Gaussian noise. The training objective is
\begin{align*}
\label{eq:dsm}
    \mathbf{\theta}^* = \argmin_{\mathbf{\theta}} \E_{t \sim U[0, T]} \big[ \lambda_t \E_{(\x_0, \x_t) \sim p_0(\x_0) p_t(\x_t | \x_0)} \norm{\s_{\mathbf{\theta}}(\x_t, t) \\ - \nabla_{\x_t} \log p_t(\x_t | \x_0)}^2 \big],  
\end{align*}
where $\lambda_t$ is a time-dependent positive weighting function and $\s_{\mathbf{\theta}}(\x_t, t)$ is the \emph{score network}. Given enough data and model capacity, the score network learns to approximate the unconditional score function: $\s_{\mathbf{\theta}}(\x_t, t) \simeq \nabla_{\x_t} \log p_t(\x_t)$. Subsequently, sampling with the score network is a matter of substituting $\s_{\mathbf{\theta}}(\x_t, t)$ for the score function in Eq.~\eqref{eq:reverse}, then solving the reverse SDE.   

\subsection{Accelerated MRI Reconstruction with Diffusion Models}
\label{sec:inverse}

Bayes' rule gives the relationship between the \emph{prior} distribution $p(\x)$ and the \emph{posterior} distribution $p(\x|\y)$ as $p(\x|\y) = p(\x)p(\y|\x) / p(\y)$. If we take the gradient w.r.t. $\x$ of the log of both sides, we find that
\begin{equation}
\label{eq:posterior_score}
    \nabla_{\x} \log p(\x|\y) = \nabla_{\x} \log p(\x) + \nabla_{\x} \log p(\y|\x). 
\end{equation}
Returning to the SDE framework, we can sample from the posterior distribution by using the decomposed posterior score from Eq.~\eqref{eq:posterior_score} in place of the score in Eq.~\eqref{eq:reverse}:
\begin{equation}
\label{eq:reverse_posterior}
    \begin{aligned}
        d\x = [\f(\x, t) - g(t)^2 &(\nabla_{\x_t} \log p_t(\x_t) \\&+ \nabla_{\x_t} \log p_t(\y|\x_t))]dt + g(t)d\Bar{\w}.
    \end{aligned}
\end{equation}
We need to know the score of the prior, $\nabla_{\x_t} \log p_t(\x_t)$, and the likelihood, $\nabla_{\x_t} \log p_t(\y|\x_t)$. To find the score of the prior distribution, we can train a diffusion model as described in Sec.~\ref{sec:diffusion}. However, the time-dependent likelihood $p_t(\y|\x_t)$ is not easy to obtain. In graphical terms, $\x_0 \rightarrow \y$ and $\x_0 \rightarrow \x_t$, with no other dependency between the measurements $\y$ and the noisy signal $\x_t$.

The authors of Diffusion Posterior Sampling (DPS) \cite{chung2023diffusion} propose to approximate $\nabla_{\x_t} \log p_t(\y|\x_t)$ by exploiting a result from Tweedie \cite{efron2011tweedie}. Tweedie's formula gives a closed-form expression for the mean of $p_0(\x_0|\x_t)$ when $p_t(\x_t|\x_0)$ belongs to an exponential family distribution. For the case of VE diffusions, we have that $p_t(\x_t|\x_0) = \cN(\x_t; \x_0, \sigma_t^2 \I)$, with the posterior mean given by Tweedie as
\begin{equation}
\label{eq:tweedie}
    \hat{\x}_0(\x_t) \coloneqq \E_{\x_t \sim p_t(\x_t|\x_0)}[\x_0 | \x_t] = \x_t + \sigma_t^2 \nabla_{\x_t} \log p_t(\x_t). 
\end{equation}
This formula allows us to get a ``one-step'' denoised estimate of the clean signal $\x_0$ using only the noisy signal $\x_t$ and the score of the unconditional distribution, $\nabla_{\x_t} \log p_t(\x_t)$. We can replace the unconditional score in Eq.~\eqref{eq:tweedie} using our trained score network $\s_{\mathbf{\theta}}(\x_t, t)$ to approximate the posterior mean as
\begin{equation}
\label{eq:tweedie_score}
    \hat{\x}_0(\x_t) = \E_{\x_t \sim p_t(\x_t|\x_0)}[\x_0 | \x_t] \simeq \x_t + \sigma_t^2 \s_{\mathbf{\theta}}(\x_t, t).
\end{equation}
DPS approximates the time-dependent score of the likelihood as
\begin{equation}
\label{eq:dps}
    \nabla_{\x_t} \log p_t(\y|\x_t) \simeq \nabla_{\x_t} \log p_t(\y|\hat{\x}_0(\x_t)). 
\end{equation}
For the setting of Gaussian measurement noise, as is the case in MRI, $\y \sim \cN(\y; \cA(\x_0), \sigma^2_{\y}\I)$, the approximation in Eq.~\eqref{eq:dps} gives us
\begin{equation}
\label{eq:likelihood_grad}
    \nabla_{\x_t} \log p_t(\y|\x_t) \simeq - \frac{1}{2\sigma_{\y}^2} \nabla_{\x_t} \norm{\cA(\hat{\x}_0(\x_t)) - \y}^2,
\end{equation}
where $\hat{\x}_0(\x_t)$ is calculated as in Eq.~\eqref{eq:tweedie_score}. Finally, using the approximation from Eq.~\eqref{eq:likelihood_grad}, we can sample from the posterior using the reverse SDE from Eq.~\eqref{eq:reverse_posterior}. In practice, calculating the gradient in Eq.~\eqref{eq:likelihood_grad} requires backpropagating through the score network with respect to the noisy input $\x_t$.

\subsection{Sampling Pattern Optimization}
\label{sec:sampling_optimization_background}
Sampling pattern optimization aims to learn a sampling pattern $\P$ that is used to produce sub-sampled MRI measurements $\mathbf{y}_i = \mathbf{PFS}_i \x_0 + \mathbf{\epsilon}$ with the goal of minimizing reconstruction error between the true signal $\x_0$ and the estimate $\tilde{\x}_0$ produced by some reconstruction algorithm. Methods typically assume access to a training dataset of fully-sampled k-space measurements. The optimization problem can be written as follows:
\begin{equation}
\label{eq:meas_learning}
    \P^* = \argmin_{\P} \E_{\x_0 \sim p_0(\x_0),\: \tilde{\x}_0 \sim p_0(\x_0 | \y)} \norm{\x_0 - \tilde{\x}_0(\P)}^2,
\end{equation}
where we write the reconstruction as $\tilde{\x}_0(\P)$ to make it explicitly a function of the sampling pattern.
\section{Methods}
\label{sec:methods}

Our goal is to learn a sampling pattern $\P$ that minimizes reconstruction error between the true signal $\x_0$ and the estimate $\tilde{\x}_0$ produced by posterior sampling using a diffusion-based generative model. In our setting, we are given a fixed, desired acceleration factor $R$ for the learned pattern. 
We also assume access to a training dataset $\{(\y^j, \mathbf{S}^j)\}_{j=1}^{N_{train}}$ of fully-sampled k-space measurements $\y^j$ and their corresponding coil sensitivity maps $\mathbf{S}^j$ (where, for clarity, $\mathbf{S}^j$ encapsulates $c$ total coil maps $\mathbf{S}_1^j, \dots, \mathbf{S}_c^j$), noting that we can derive reference images $\x_0^j$ through the pseudo-inverse of the sampling operator using $\y^j$ and $\mathbf{S}^j$. 
In addition, we are given a diffusion model $\s_{\mathbf{\theta}}(\x_t, t)$ pre-trained on the distribution of reference images $p_0(\x_0)$. 

In practice, there are two challenges to address when optimizing sampling patterns by solving Eq.~\eqref{eq:meas_learning} for generative diffusion based reconstruction algorithms. First, sampling from the posterior to get $\tilde{\x}_0$ involves an iterative application of a differential equation solver. This makes differentiating with respect to $\P$ challenging, as na\"ively backpropagating through the sampling procedure is infeasible due to memory constraints. Second, it is unclear how to best parameterize and optimize the discrete, binary sampling pattern. A common solution is to relax the problem by learning a distribution over sampling patterns instead of the pattern itself. However, drawing sampling patterns from the learned distribution usually involves spatially independent selection of k-space points based on their probability values, often leading to the acquisition of samples containing redundant information in the case of multi-coil imaging.

In this section, we outline our proposed solutions to the above two problems. First, we derive an extension to Tweedie's formula specifically for the case of posterior sampling. Our result leads us to a simpler form of the problem in Eq.~\eqref{eq:meas_learning} that makes gradient-based methods computationally tractable. Second, we detail a novel approach for learning Cartesian sampling patterns, visualized in Fig. \ref{fig:overview}. Our method
involves an iterative greedy process of selecting the k-space locations that are most informative based on the value of their loss gradients as well as proximity to previously-selected k-space locations. Taken together, these two innovations form a flexible and powerful new method to optimize MRI sample selection for reconstruction using diffusion-based models. 
\begin{figure*}[!t]
    \centering
    \includegraphics[width=0.8\linewidth]{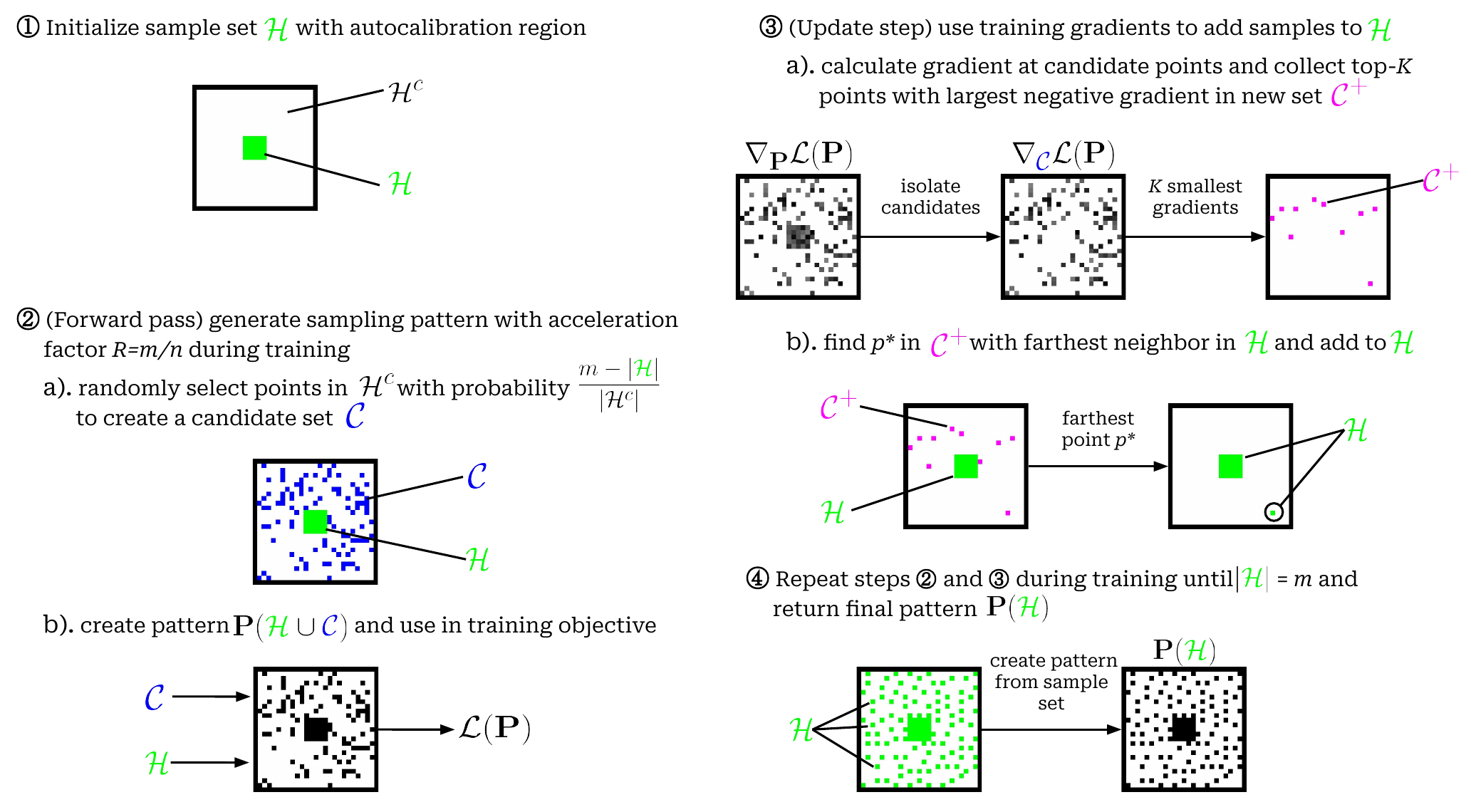}
    \caption{\textbf{Overview of our method for learning sampling patterns.} Starting with the autocalibration region, we greedily grow a set $\mathcal{H}$ (shown in \textcolor{green}{green} in the figure) of k-space sampling locations on the Cartesian grid. At each training iteration, we generate a pattern with proper acceleration $R=\frac{m}{n}$ using the previously selected points from $\mathcal{H}$ along with randomly chosen \textit{candidate points} in k-space that have not yet been added to $\mathcal{H}$. Then, we use the generated pattern in our diffusion training objective to get a loss term and calculate the gradient of the loss at each candidate sampling location. Next, we isolate the $K$ candidate points with the smallest (i.e., largest negative) gradient values, select the point that is farthest away (in terms of Euclidean distance) from its nearest neighbor in $\mathcal{H}$, and finally add that point to $\mathcal{H}$. Our training proceeds, adding one k-space location at a time, until $|\mathcal{H}| = m$ and our desired acceleration is reached.}
    \label{fig:overview}
\end{figure*}

\subsection{Loss Formulation}
\label{sec:loss}

In this section, we present a extension to Tweedie's formula that we then use to formulate a simpler alternative to the problem in Eq.~\eqref{eq:meas_learning}. For the case of VE SDEs, recall that the noisy signal distribution at time $t$ is given by $p_t(\x_t|\x_0) = \cN(\x_t; \x_0, \sigma_t^2 \I)$, allowing us to use Eq.~\eqref{eq:tweedie} to get a one-step approximation of the denoised posterior mean $\E[\x_0 | \x_t]$. We now extend Tweedie's formula to include measurements $\y \sim p(\y | \x_0)$.     

\begin{restatable}[Tweedie's formula with additional measurements]{prop}{tweedie}
\label{prop:tweedie}
    Let $\x_0 \sim p_0(\x_0)$ be an unknown signal, $\x_t \sim p_t(\x_t | \x_0) = \cN(\x_t; \x_0, \sigma^2_t \I)$ a version of $\x_0$ corrupted by additive Gaussian noise, and $\y \sim p(\y | \x_0)$ some additional measurements of $\x_0$. Furthermore, let $\x_t$ and $\y$ be conditionally independent given $\x_0$: $p_t(\x_t | \x_0, \y) = p_t(\x_t | \x_0)$. Finally, assume that $p_t(\x_t | \y)$ is supported everywhere. Then, the posterior mean of $\x_0$ conditioned on $\x_t$ and $\y$ is given by  
    \begin{equation}
    \label{eq:posterior_tweedie}
        \E[\x_0 | \x_t, \y] = \x_t + \sigma^2_t \nabla_{\x_t} \log p_t(\x_t | \y). 
    \end{equation}
\end{restatable}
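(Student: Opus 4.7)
The plan is to mirror the standard derivation of Tweedie's formula, but treating $\y$ as a fixed conditioning variable throughout, with the conditional independence assumption doing the essential work.

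First, I would apply Bayes' rule to the target posterior and use the conditional independence hypothesis:
\begin{equation*}
    p(\x_0 \mid \x_t, \y) = \frac{p_t(\x_t \mid \x_0, \y)\, p(\x_0 \mid \y)}{p_t(\x_t \mid \y)} = \frac{\cN(\x_t;\x_0,\sigma_t^2 \I)\, p(\x_0 \mid \y)}{p_t(\x_t \mid \y)},
\end{equation*}
so that the conditional mean can be written as an integral $\E[\x_0 \mid \x_t, \y] = \frac{1}{p_t(\x_t \mid \y)} \int \x_0\, \cN(\x_t;\x_0,\sigma_t^2 \I)\, p(\x_0 \mid \y)\, d\x_0$. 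The everywhere-positive support assumption on $p_t(\x_t \mid \y)$ guarantees this ratio is well-defined.

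Next, I would compute $\nabla_{\x_t} p_t(\x_t \mid \y)$ by differentiating under the integral sign. Using the identity $\nabla_{\x_t} \cN(\x_t;\x_0,\sigma_t^2 \I) = -\sigma_t^{-2}(\x_t - \x_0)\,\cN(\x_t;\x_0,\sigma_t^2 \I)$, I get
\begin{equation*}
    \sigma_t^2\, \nabla_{\x_t} p_t(\x_t \mid \y) = -\x_t\, p_t(\x_t \mid \y) + \int \x_0\, \cN(\x_t;\x_0,\sigma_t^2 \I)\, p(\x_0 \mid \y)\, d\x_0.
\end{equation*}
Rearranging and dividing through by $p_t(\x_t \mid \y)$ (well-defined by the support assumption) yields $\E[\x_0 \mid \x_t, \y] = \x_t + \sigma_t^2\, \nabla_{\x_t} \log p_t(\x_t \mid \y)$, as claimed.

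The derivation is essentially a re-indexing of the usual Tweedie argument, with $p(\x_0 \mid \y)$ playing the role of the prior $p_0(\x_0)$. The only nontrivial input is the conditional independence $p_t(\x_t \mid \x_0, \y) = p_t(\x_t \mid \x_0)$, which is what keeps the likelihood factor Gaussian and therefore keeps the score identity usable; without it, differentiating $p_t(\x_t \mid \y)$ would not produce the clean factorization above. The main technical obstacle is justifying the exchange of differentiation and integration, which is immediate if $p(\x_0 \mid \y)$ has finite first moment (so dominated convergence applies via a Gaussian tail bound on $\|\x_0 - \x_t\|\,\cN(\x_t;\x_0,\sigma_t^2 \I)$); I would note this regularity assumption explicitly rather than argue it in detail.
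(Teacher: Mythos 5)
Your proposal is correct and follows essentially the same route as the paper's proof: both express $p_t(\x_t \mid \y)$ as a marginalization over $\x_0$ conditioned on $\y$, differentiate under the integral using the Gaussian form of $p_t(\x_t \mid \x_0)$ (made available by the conditional independence assumption), and then identify the resulting integral with $p_t(\x_t\mid\y)\,\E[\x_0\mid\x_t,\y]$ via Bayes' rule before dividing by $p_t(\x_t\mid\y)$. The only differences are cosmetic (you differentiate the Gaussian directly rather than passing through the identity $\nabla f = f\,\nabla\log f$, and you flag the dominated-convergence justification for exchanging gradient and integral, which the paper leaves implicit).
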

\sloppy
We give the proof in Appendix A. Recalling that $\nabla_{\x_t} \log p_t(\x_t|\y) = \nabla_{\x_t} \log p_t(\x_t) + \nabla_{\x_t} \log p_t(\y|\x_t)$, our result essentially allows us to leverage the score of the likelihood \emph{in addition to} the prior to obtain a finer estimate of $\x_0$ than using the prior alone. The assumption of conditional independence of $\x_t$ and $\y$ given $\x_0$ is satisfied in the inverse problem setting, as we have that $\x_0 \rightarrow \y$ and $\x_0 \rightarrow \x_t$ with no other dependencies.

In practice, we only have access to a score network and a surrogate form of the score of the time-dependent likelihood from Eq.~\eqref{eq:dps}. Therefore, a tractable approximation of the expectation in Eq.~\eqref{eq:posterior_tweedie} is
\begin{equation}
\label{eq:posterior_tweedie_appx}
\begin{aligned}
    \E[\x_0 | \x_t, \y] &\simeq \x_t + \sigma^2_t [\s_{\mathbf{\theta}}(\x_t, t) + \nabla_{\x_t} \log p_t(\y|\hat{\x}_0(\x_t))] \\
    & = \hat{\x}_0(\x_t) + \sigma_t^2 \nabla_{\x_t} \log p_t(\y|\hat{\x}_0(\x_t)),
\end{aligned}
\end{equation}
where $\hat{\x}_0(\x_t)$ is given by Tweedie's formula as in Eq.~\eqref{eq:tweedie_score}. 

For the multi-coil MRI setting with coil maps $\mathbf{S}_1,\dots,\mathbf{S}_c$, we assume that $p(\y_i|\x_0) = \cN(\y_i; \mathbf{PFS}_i\x_0, (\sigma^2_{y}/2)\I)$, and use $\y$ to denote the collection of measurements $\y_i$ from all $c$ coils. Further, we take the gradient of the likelihood term in Eq.~\eqref{eq:posterior_tweedie_appx} with respect to the denoised estimate $\hat{\x}_0$ as opposed to the noisy $\x_t$, which would require backpropagating through the diffusion model. This change considerably reduces memory and computational costs during training, and we find it works well in practice. Our final formula for calculating the posterior mean is
\begin{equation}
\label{eq:posterior_tweedie_final}
    \E[\x_0 | \x_t, \y] \simeq \hat{\x}_0(\x_t) - \gamma \nabla_{\hat{\x}_0} \sum_{i=1}^{c} \norm{\mathbf{PFS}_i\hat{\x}_0(\x_t) - \y_i}^2,
\end{equation}
where $\gamma$ is a likelihood step size hyperparameter. Using this result, we finally present our training objective:
\begin{equation}
\label{eq:our_method}
\begin{gathered}
    \P^* = \argmin_{\P} \E_{\x_0 \sim p_0(\x_0)} \norm{\x_0 - \tilde{\x}_0(\P)}^2, \\ \tilde{\x}_0(\P) = \E_{\: \y \sim p(\y | \x_0),\: \x_t \sim p_t(\x_t|\x_0) , \: t \sim U[0, T]}[\x_0 | \x_t, \y].
\end{gathered}
\end{equation}

Starting with a noisy signal at time $t$, samplers for diffusion models generally follow the approach of: get the denoised estimate at time $t$ using the score network, then add back the appropriate amount of noise for time $t-1$ and repeat. Therefore, by learning forward operators that produce good one-step posterior denoised estimates, our method facilitates good final reconstructions. We are able to use powerful generative models ``off-the-shelf'' without further training.

\subsection{Learning Sampling Patterns}
\label{sec:pattern}

We propose a novel method for learning Cartesian sampling patterns for multi-coil accelerated MRI. Our method proceeds in iterations, greedily growing a set of k-space samples by considering both how informative a potential new sample is for reducing training error and how redundant that sample is with samples that are already included in the set. We learn a single sampling pattern instance for a given set of multi-coil, fully-sampled k-space training data and a desired acceleration factor. Our method does not require end-to-end or joint training with a reconstruction method, and instead flexibly interfaces with a powerful, pre-trained diffusion model with frozen weights. 

First, we describe how we represent learned patterns and define our problem more clearly. Let $\Omega$ be the set of all possible k-space sampling locations (or \textit{points}). In Cartesian sampling, $\Omega$ is a finite grid of size $|\Omega|=n$ consisting of equally spaced, discrete points $p \in \Omega$. Each point $p$ is associated with a unique coordinate vector $\mathbf{v}(p) = [x \:\: y]^T$ that indicates the location of the point on the grid, with the origin $[0 \:\: 0]^T$ at the DC component and each grid dimension normalized to the range $[-1,1]$. We can write a sampling pattern as a function $\P(\cH)$, where $\cH \subseteq \Omega$ contains the sampled k-space points. We consider the problem of learning an optimal set $\cH$ of size $|\cH|=m$, which is equivalent to learning an optimal pattern $\P$ for a given acceleration factor $R = m/n$.

We learn the sample set $\cH$ in an iterative manner by greedily adding new k-space points to the set until the desired acceleration is reached. Prior to the start of training, we initialize $\cH$ with the points in the autocalibration region, which are always sampled. At each training iteration, we first randomly select candidate k-space points from outside of $\cH$, then generate a sampling pattern that includes the points in $\cH$ as well as the new candidate points, compute the training loss using the pattern and our objective in Eq.~\eqref{eq:our_method}, and finally select one candidate point to add to $\cH$ based on our greedy update rule. Specifically, let $\cH^C \coloneqq \Omega / \cH$ denote the k-space locations that we have not added to our learned set. We select points $p \in \cH^C$ independently with probability $(m - |\cH|)/|\cH^C|$, forming a candidate set $\mathcal{C}$ of the selected points. Then, we generate a sampling pattern $\P(\cH \cup \mathcal{C})$ that merges the current set $\cH$ with the candidates in $\mathcal{C}$ and calculate training loss $\mathcal{L}(\P)$ following Eq.~\eqref{eq:our_method}. Note that since $\E[|\cH \cup \mathcal{C}|] = m$, our approach maintains the correct acceleration factor on average during training. To guide our selection of which candidate to add to $\cH$, we compute the gradient of the training loss with respect to each point in $\mathcal{C} = \{p_1, \dots, p_{|\mathcal{C}|}\}$, which we denote as $\nabla_{\mathcal{C}} \mathcal{L}(\P) \coloneqq [\nabla_{p_1} \mathcal{L}(\P) \: \dots \: \nabla_{p_{|\mathcal{C}|}} \mathcal{L}(\P)]^T$. Next, we detail how we choose a candidate to add at each iteration.         

Our key innovation is a greedy update that selects candidate k-space points to add to our learned set $\cH$ based on both the informativeness of the candidate point and its incoherence with existing sampling locations. First, we fix a parameter $K \in \mathbb{Z}^+$ and determine the $K$ candidate sampling points $p \in \mathcal{C}$ with the smallest gradient values $\nabla_{p} \mathcal{L}(\P)$ (i.e. the largest negative gradient values). We collect these points in a new \textit{top-K} candidate set $\mathcal{C}^+$. Next, we solve the problem
\begin{equation}
\label{eq:furthest_neighbor}
p^* = \argmax_{p \in \mathcal{C}^+} \min_{p' \in \cH} ||\mathbf{v}(p) - \mathbf{v}(p')||^2.
\end{equation}
In other words, we find the candidate sampling point whose nearest neighbor in the set of existing, learned points $\cH$ is the furthest away. We finally add $p^*$ to $\cH$ and complete the training iteration. If there are multiple points with the same optimal value in Eq.~\eqref{eq:furthest_neighbor}, we randomly choose one point. Training proceeds by growing the size of our sampling set by one point at each iteration until we reach the desired acceleration when $|\cH| = m$. We present a visual overview of our learning procedure in Fig.~\ref{fig:overview}. 

By filtering the candidate sampling points based on their contribution to decreasing the training loss, we isolate the samples that are most informative about the structure of the MRI data. We then pick spatially isolated sampling locations, reducing the amount of redundant information captured in the sampling pattern and encouraging exploration of sparsely-sampled k-space regions. Further, by sequentially choosing sampling locations to remain ``on'', we force updates in subsequent iterations to consider these existing points, unlike methods that learn distributions over sampling patterns.

\section{Experimental Details}
\label{sec:experiments}

\subsection{Dataset}

We experiment on multi-coil brain and knee scan data from fastMRI \cite{zbontar2018fastmri}. For brains, we use T2-weighted volumes collected at a field strength of 1.5 T. For knees, we use Proton Density (PD) volumes without and with fat suppression (PDFS), collected at a field strength of 3 T. All MRI data are initially stored as complex-valued multi-coil k-space measurements. 

We pre-processed the raw data by reducing the FOV in the read out direction by a factor of two and performing noise whitening. We use the processed data to calculate coil sensitivity maps using ESPIRiT \cite{uecker2014espirit} as well as minimum variance unbiased estimator (MVUE) images to serve as our fully sampled reference during training and testing. We give full details about data preparation in Appendix B.

After pre-processing the raw fastMRI T2 brain data, we obtained 14162, 3110, and 280 samples for the training, validation, and test sets respectively. Similarly, for the PD knee data we had 6080, 864, and 281 samples for the training, validation, and test sets respectively. Finally, the training, validation, and test splits of the PDFS data were 6211, 880, and 270 samples. In the diffusion experiments, the training set was used to train the diffusion model, then the validation set was used to learn the sampling patterns with the pre-trained diffusion model. For the end-to-end/unrolled reconstruction network baselines, the training and validation sets were combined during training. We evaluate all methods on the test sets for their respective anatomies.

\subsection{Diffusion Models}

\subsubsection{Training}

We train diffusion models following the EDM \cite{Karras2022edm} framework and pre-conditioning. We train one model using the T2 brain data and one using the combined PD and PDFS knee data. The brain model has $13.5M$ parameters and is trained with a batch size of 54, while the knee model has $14.5M$ parameters and is trained with batch size 27. Both models were trained for $10M$ samples using the Adam \cite{kingma2015adam} optimizer with a learning rate of $10^{-4}$, an exponential moving average (EMA) half-life of $50K$ samples, and random horizontal flips for data augmentation. To make the complex-valued data compatible with real-valued network weights, we represent the data as two-channel real-valued images.

\subsubsection{Reconstruction Algorithm}

We use DPS \cite{chung2023diffusion} as our posterior sampling algorithm for reconstructing MRI images. We present our posterior sampler in Appendix C. Following the implementation of DPS, we set the log-likelihood step size parameter as $\rho_{dps} = \rho / \norm{\y - \cA(\hat{\x}_0(\x_t))}$, where $\rho$ is a tuneable hyperparameter. For PDFS knees and T2 brains, we set $\rho = 2.5$, for PD knees, we set $\rho = 5$, and for the combined knees, we set $\rho = 3$. We use 200 sampling steps with stochasticity parameter $S_{\mathrm{churn}} = 40$ and the default EDM noise schedule. For each set of MRI measurements, we draw five reconstructions from the posterior and take the mean of the reconstructions as our final output.   

\subsection{Learning Sampling Patterns}

For a given acceleration $R$, we train patterns with a corresponding batch size of $R$ samples in all experiments. To arrive at this value, we performed a search over batch sizes in $\{1, \frac{R}{4}, \frac{R}{2}, R, 2R\}$. We choose batch size as a multiple of $R$ since (i) patterns with lower acceleration factors are less sensitive to variance in the training gradients since they have a larger sampling budget (and vice-versa for higher accelerations), and (ii) this ensures that patterns learned with different acceleration factors see (roughly, after accounting for a fixed autocalibration region) the same number of samples during training. Similarly, we choose the top-$K$ parameter from $\{1, \frac{R}{4}, \frac{R}{2}, R, 2R\}$. For all experiments, we initialize $K=1$ and linearly increase $K$ over training to its final value, which we find improves results over fixing a single value of $K$. We choose $\gamma=1$ for the likelihood step size parameter in Eq.~\eqref{eq:posterior_tweedie_final} during training for all experiments.

\subsection{Overview of Compared Approaches}

Our experiments are designed to study how the proposed objective and sampling pattern optimization strategy affects diffusion-based MRI reconstruction. To this end, we first present comparisons of diffusion-based reconstruction with fixed or learned sampling patterns, which form the primary focus of this work. Then, our quantitative results also include end-to-end learned reconstruction. As the goal of this work is not to evaluate the performance of diffusion versus end-to-end reconstruction methods, these end-to-end methods are not for direct comparisons but rather included as reference points for contextualization. All sampling patterns (i) include a fixed $20\times20$ fully sampled autocalibration region at the center of k-space, and (ii) are constrained to a circular region inscribed in the rectangular k-space grid (cut-corner patterns). We more explicitly detail our primary diffusion comparisons below:

\noindent \textbf{Diffusion-based reconstruction with variable density Poisson Disc patterns} (\textit{Diffusion+Poisson}): As a baseline for diffusion-based methods without sampling optimization, we reconstruct images using a pre-trained diffusion model with non-learned variable-density Poisson Disc sampling generated with sigpy \cite{ong2019sigpy}. 

\noindent \textbf{Diffusion-based reconstruction with LOUPE optimized sampling patterns using our objective} (\textit{Diffusion+Our Loss+LOUPE}): Our proposed objective, described in Sec.~\ref{sec:loss}, enables application of LOUPE \cite{multicoil_loupe}, a previously proposed sampling pattern optimization scheme, for diffusion-based reconstruction. The sampling pattern is optimized with the same frozen, pre-trained diffusion model using Adam \cite{kingma2015adam} with a learning rate of $10^{-3}$ for 5 epochs and batch size of 1.

\noindent \textbf{Diffusion-based reconstruction with optimized sampling patterns using the proposed learning strategy and objective} (\textit{Ours Full}): This represents the full proposed method, combining our loss formulation with the proposed sampling pattern construction strategy.

We also detail the end-to-end, MoDL-based\cite{aggarwal2018modl} methods presented for contextualizing performance, rather than direct comparison. All MoDL variants consist of an unrolled architecture with 6 unrolls and 6 inner CG steps per unroll, a shared-parameter U-Net with 7.6M parameters, and are optimized using Adam with a learning rate of $10^{-4}$ for 5 epochs and batch size of 1.

\noindent \textbf{MoDL with variable density Poisson Disc sampling patterns} (\textit{MoDL+Poisson}): MoDL trained with fixed variable-density Poisson Disc sampling.

\noindent \textbf{MoDL with LOUPE sampling patterns} (\textit{MoDL+LOUPE}): Joint optimization of MoDL reconstruction and LOUPE sampling patterns.

\section{Results}

\subsection{Comparing Reconstruction Quality}

Fig. \ref{fig:combined_recons} presents slices reconstructed from k-space data under-sampled by $R=20$ with diffusion-models using a Poisson mask and masks optimized with our loss + LOUPE and our full method for PDFS knee, T2 brain, and PD knee data. Our full method achieves improved quantitative metrics, and the zoomed-in areas indicate that our proposed reconstruction produces slices with more fine structural details than those reconstructed with the other diffusion model-based methods. Similarly Fig. \ref{fig:pd_recon} and Fig. \ref{fig:brain_recon} show reconstructed slices, along with $10\times$ error maps, at $R=4$ and $R=16$ for T2 brain and PD knee data respectively. Again, our method, with its optimized sampling pattern, achieves improved quantitative and qualitative performance in comparison to the competing diffusion-based methods.

\begin{figure*}[!t]
    \centering
    \includegraphics[width=\linewidth]{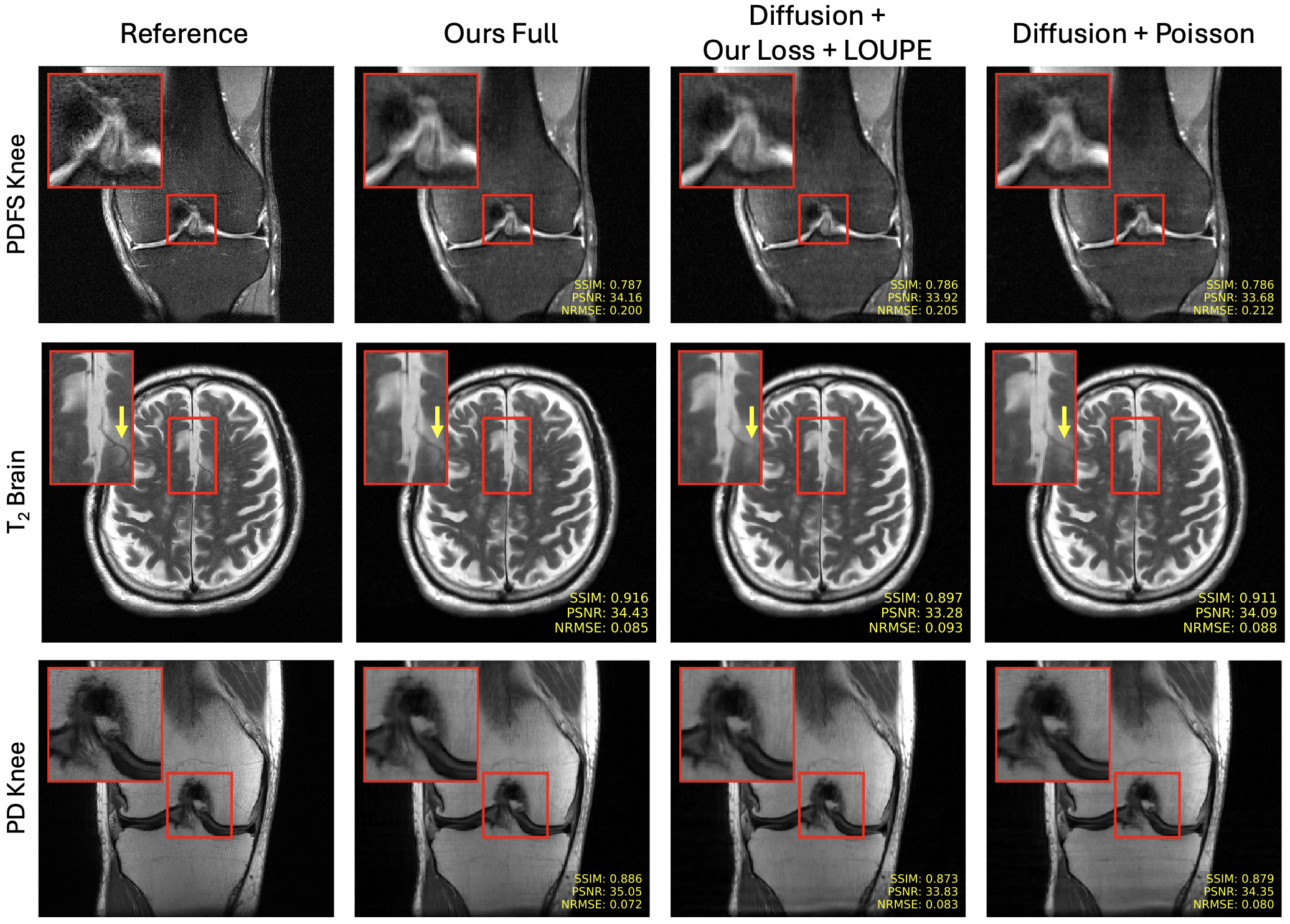}
    \caption{\textbf{Example Reconstructions from different diffusion methods.} We present reconstructions of PDFS knee slice, T2 brain slice, and PD knee slice, all from the test set at $R=20$. Each subfigure includes a zoomed-in region to show fine details (top-left corner) and quantitative metrics (lower-right corner). Reconstructions from our full method, which combines the proposed objective and sampling optimization strategy, achieves better quantitative and qualitative performance in comparison to the baseline diffusion methods.}
    \label{fig:combined_recons}
\end{figure*}

\begin{figure*}[!t]
    \centering
    \includegraphics[width=0.99\linewidth]{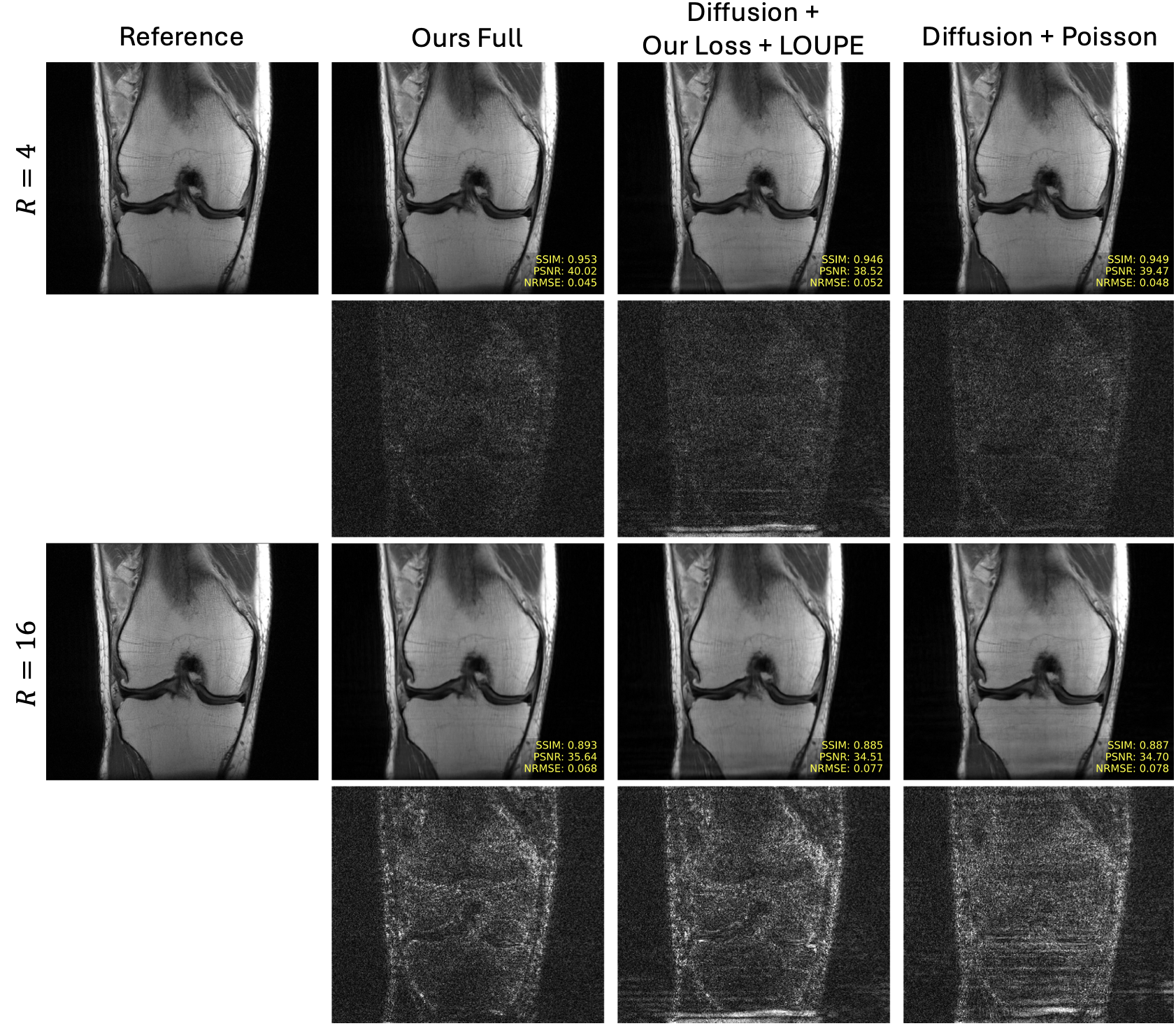}
    \caption{\textbf{Reconstructions from different diffusion methods for a PD knee slice from the test set at R=4 and R=16.} We display the reconstructed slice along with the corresponding residual image (scaled 10$\times$). We present quantitative metrics for each reconstructed slice in the lower-right corner. Our full method produces reconstructions with fewer errors than reconstructions from baseline diffusion methods.}
    \label{fig:pd_recon}
\end{figure*}

\begin{figure*}[!t]
    \centering
    \includegraphics[width=0.9\linewidth]{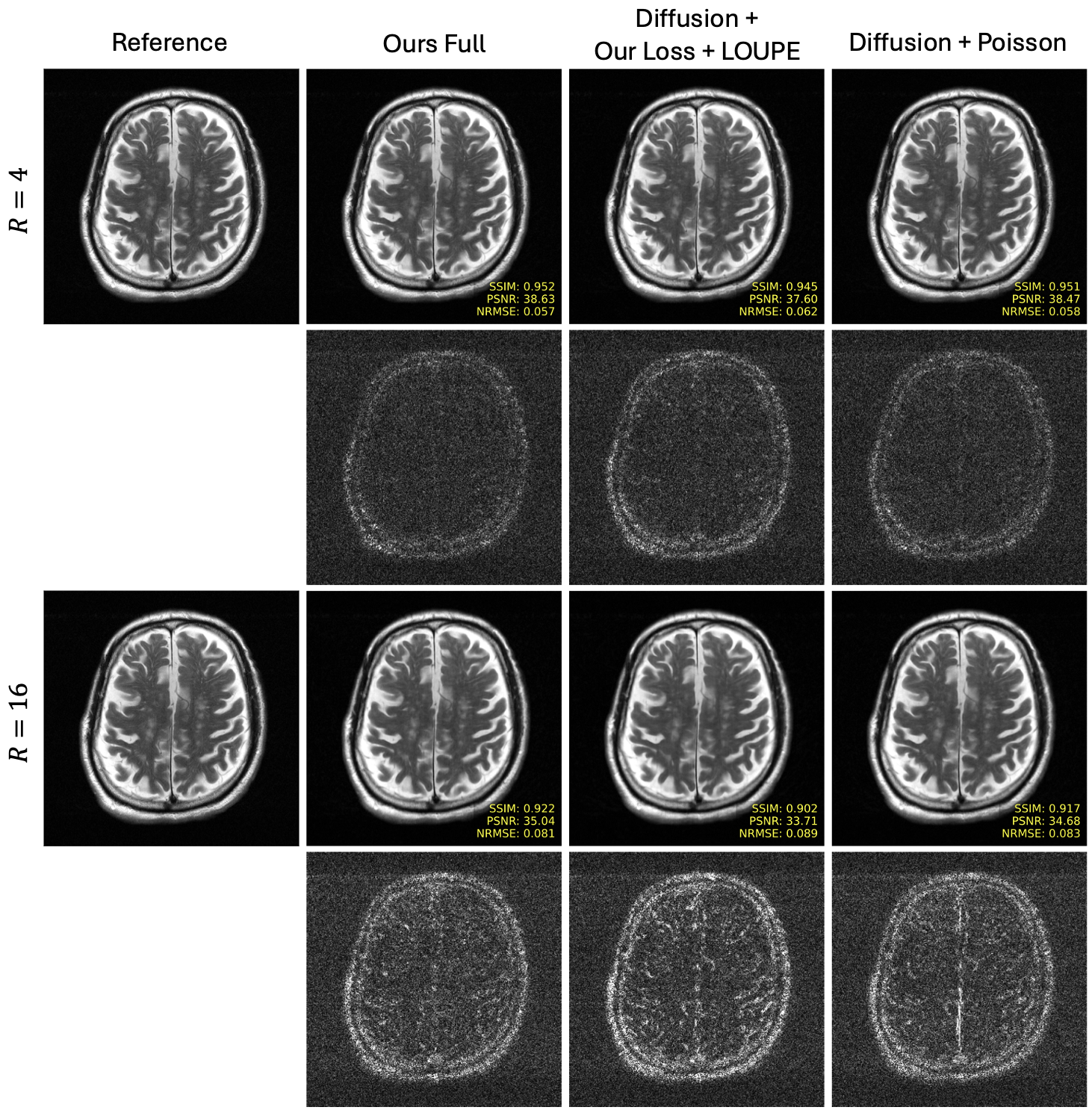}
    \caption{\textbf{Reconstructions from different diffusion methods for a T2 brain slice from the test set at R=4 and R=16.} We display the reconstructed slice along with the corresponding residual image (scaled 10$\times$). We present quantitative metrics for each reconstructed slice in the lower-right corner. Our full method produces reconstructions with fewer errors than reconstructions from baseline diffusion methods.}
    \label{fig:brain_recon}
\end{figure*}

Fig. \ref{fig:metrics} presents quantitative comparisons of peak signal-to-noise ratio (PSNR), structural similarity index measure (SSIM) \cite{ssim} and normalized-root-mean-square-error (NRMSE) of reconstructed slices with acceleration factors of $R=\{4, 8, 12, 16, 20\}$. Our full method outperforms the other two diffusion-based methods across all anatomies, metrics, and accelerations, suggesting that our approach for learning sampling patterns may be better suited for diffusion-based reconstructions in comparison to Poisson Disc patterns and multi-coil LOUPE. Quantative MoDL results are also presented for reference. Since MoDL optimizes an NRMSE objective end-to-end during training, it generally achieves similar or lower NRMSE and PSNR in comparison to the diffusion-based methods. However, the proposed diffusion-based method achieves better SSIM than the end-to-end methods in all settings but PD knee data, where the two methods perform comparably. 

\begin{figure*}[t!] 
    \centering
    \begin{subfigure}[b]{0.23\linewidth}
        \centering
        \includegraphics[width=\linewidth]{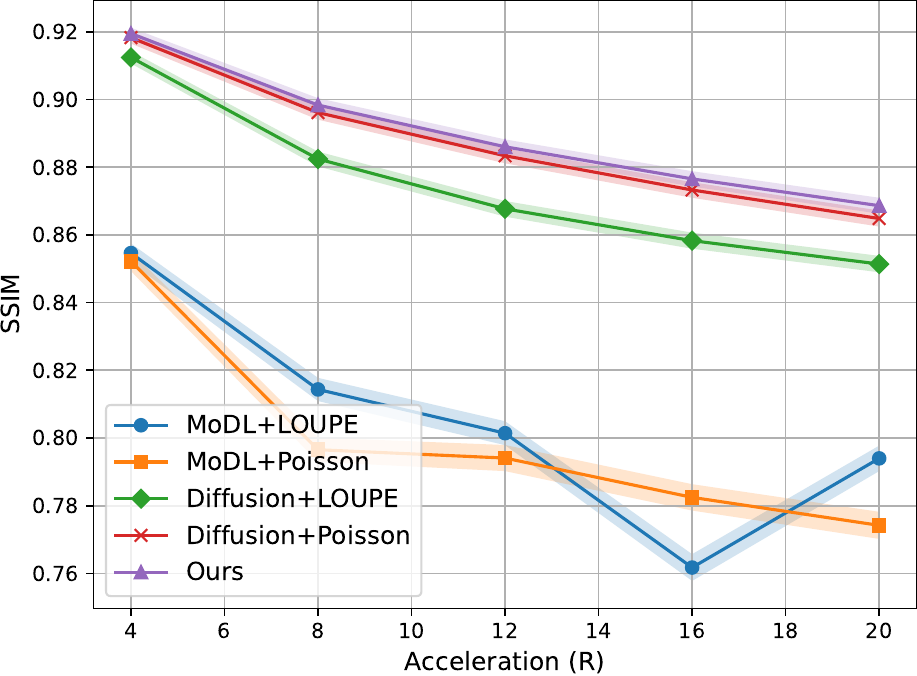}\vspace{0.15in}
        \includegraphics[width=\linewidth]{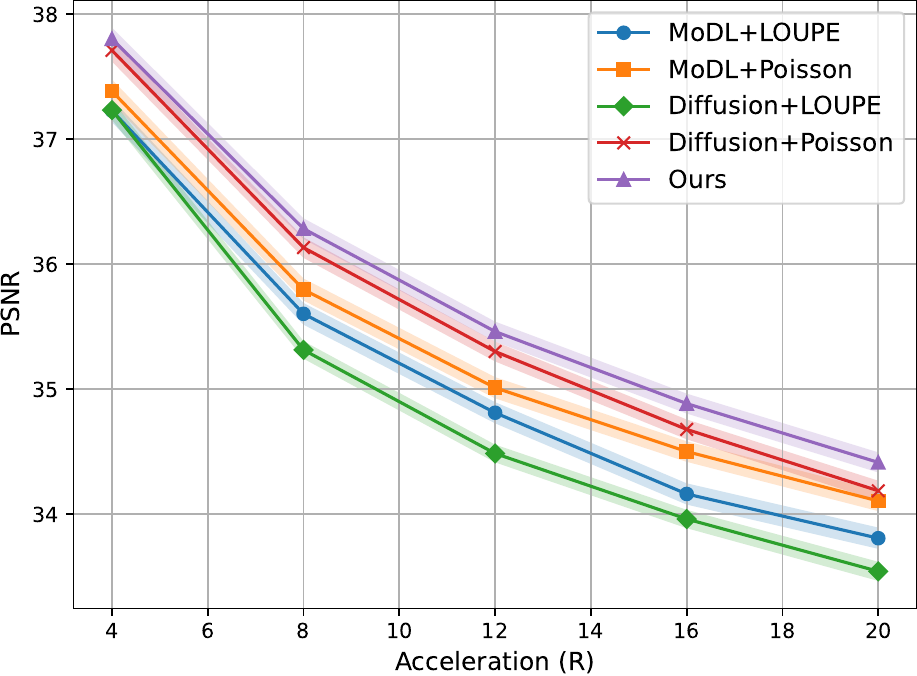}\vspace{0.15in}
        \includegraphics[width=\linewidth]{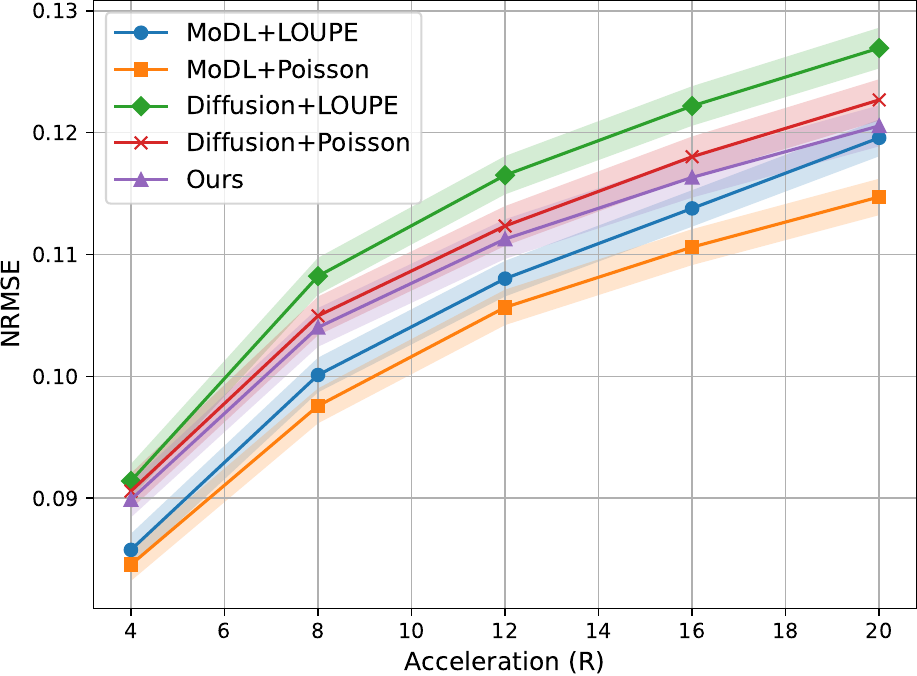}
        \caption{T2 Brain}
        \label{fig:brain_metrics}
    \end{subfigure}
    \hfill
    \begin{subfigure}[b]{0.23\linewidth}
        \centering
        \includegraphics[width=\linewidth]{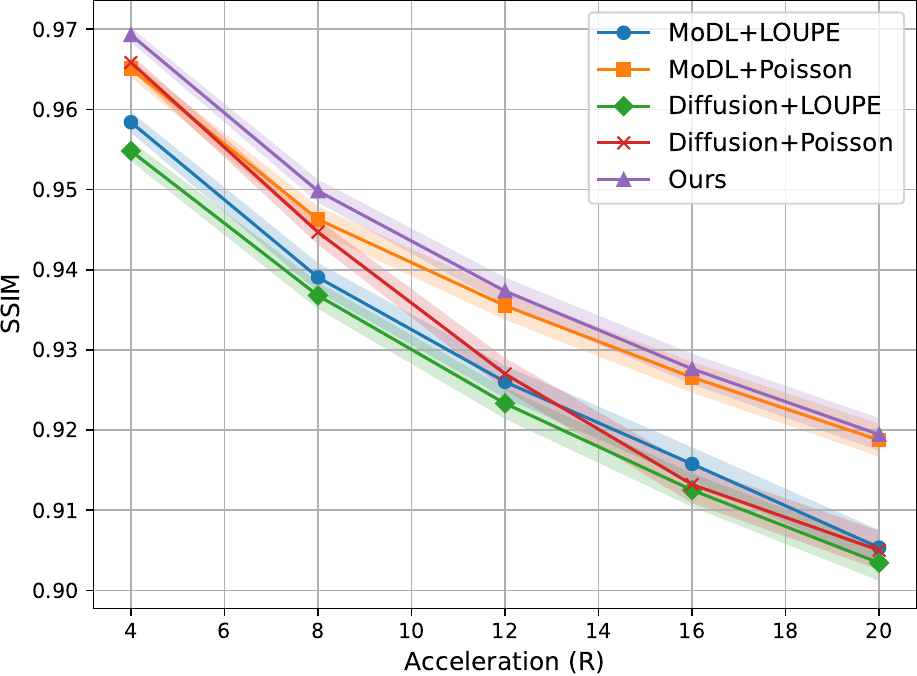}\vspace{0.15in}
        \includegraphics[width=\linewidth]{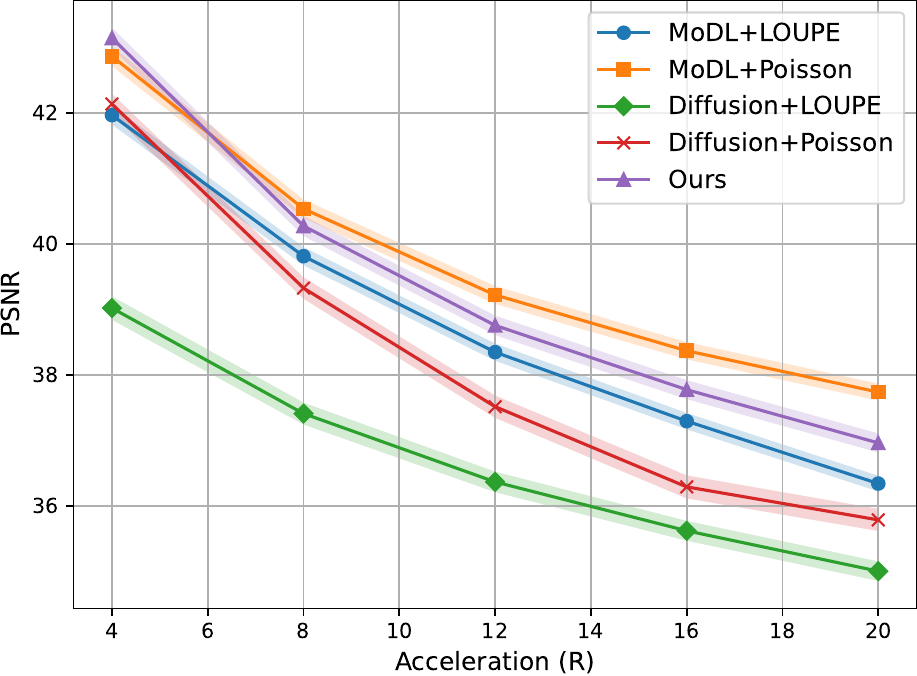}\vspace{0.15in}
        \includegraphics[width=\linewidth]{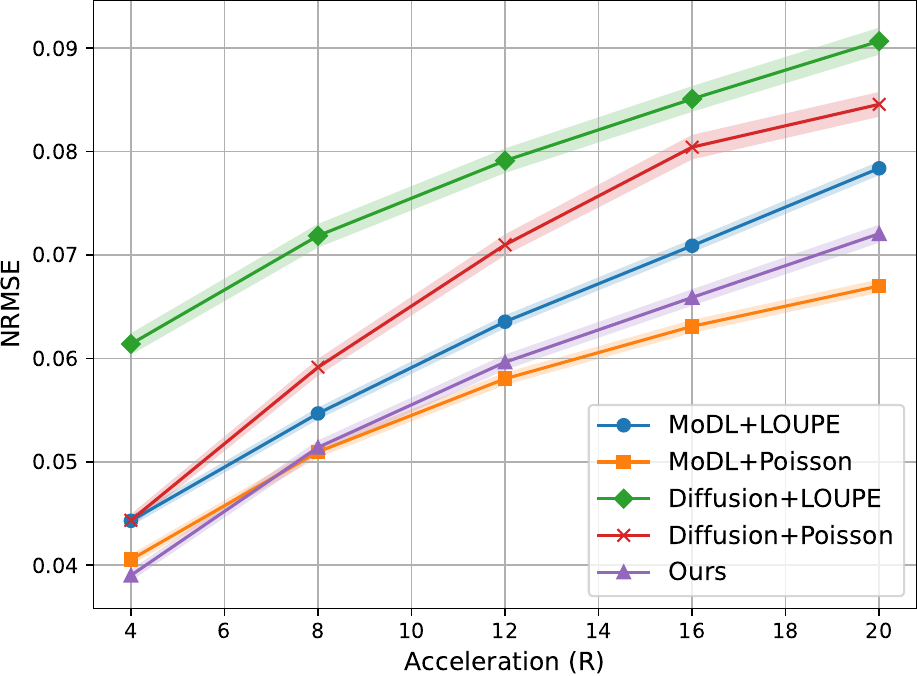}
        \caption{PD Knee}
        \label{fig:pd_metrics}
    \end{subfigure}
    \hfill
    \begin{subfigure}[b]{0.23\linewidth}
        \centering
        \includegraphics[width=\linewidth]{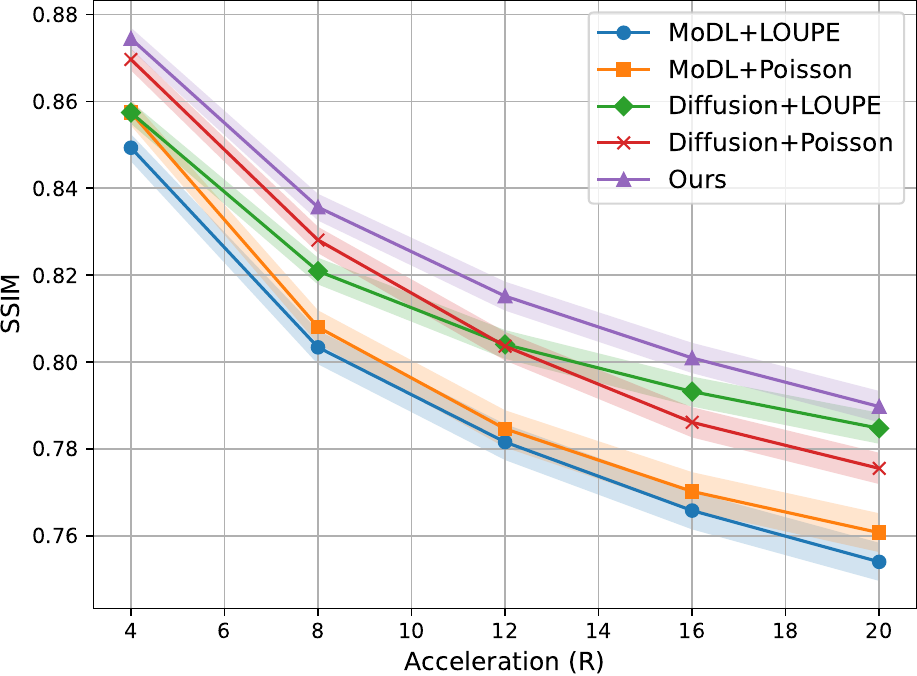}\vspace{0.15in}
        \includegraphics[width=\linewidth]{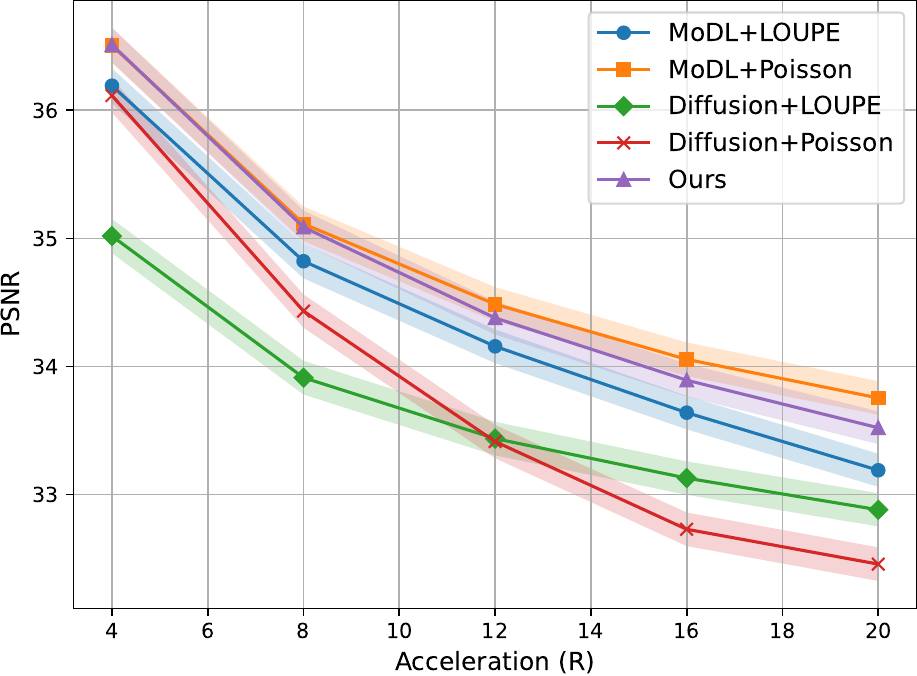}\vspace{0.15in}
        \includegraphics[width=\linewidth]{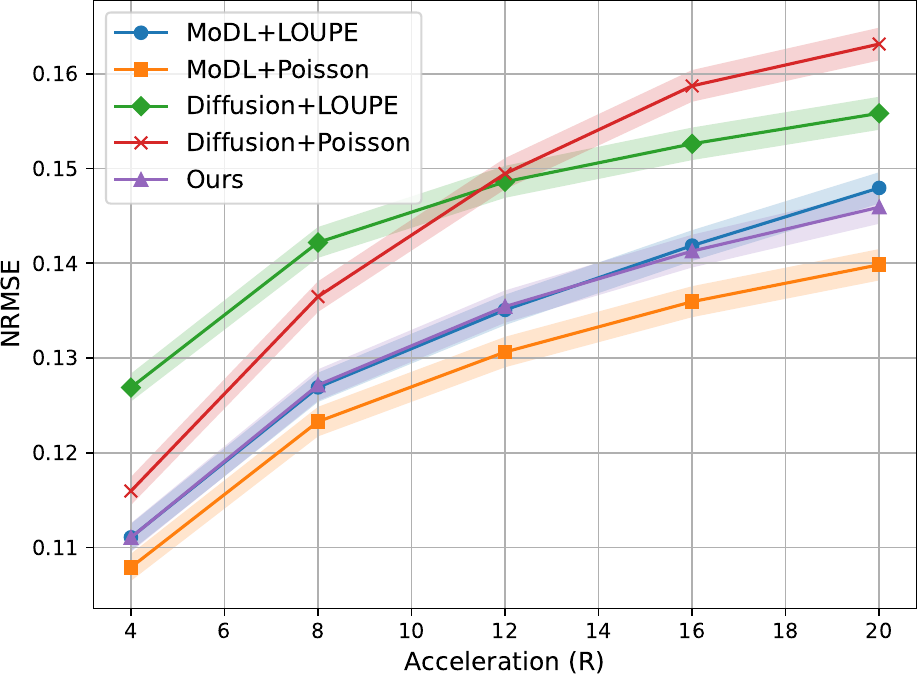}
        \caption{PDFS Knee}
        \label{fig:pdfs_metrics}
    \end{subfigure}
    \hfill
    \begin{subfigure}[b]{0.23\linewidth}
        \centering
        \includegraphics[width=\linewidth]{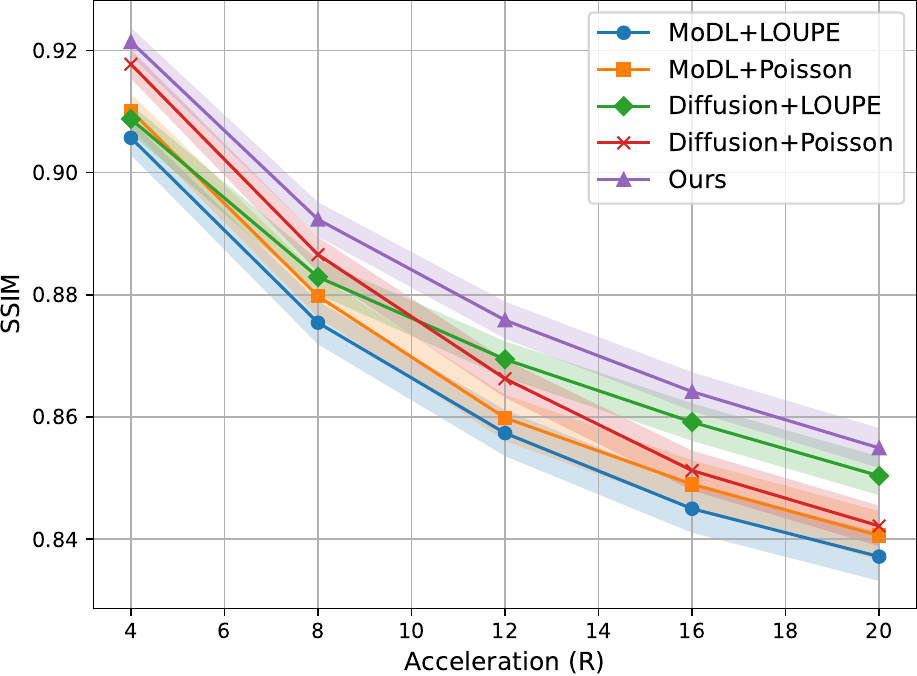}\vspace{0.15in}
        \includegraphics[width=\linewidth]{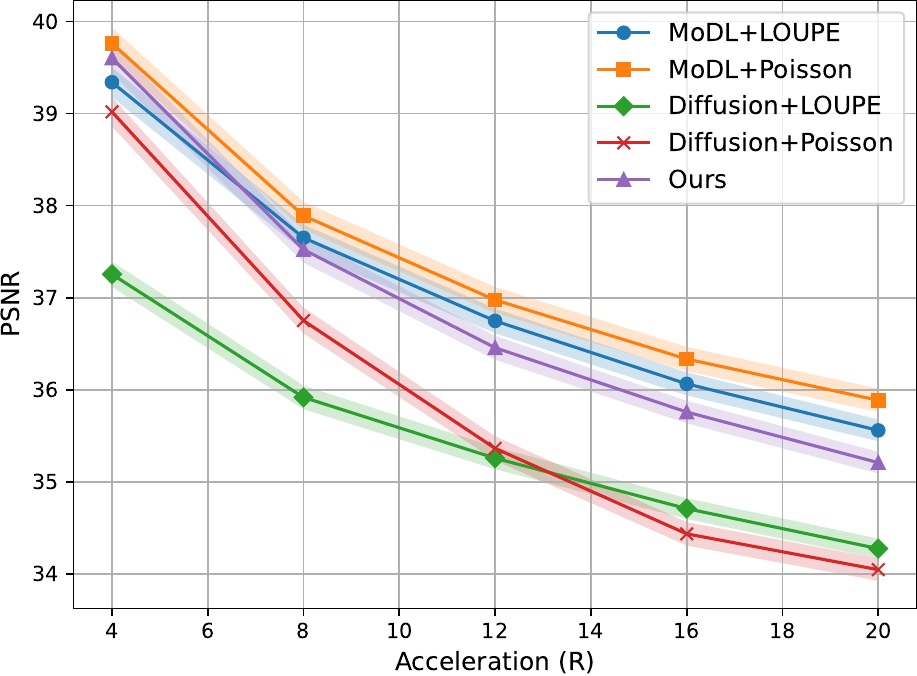}\vspace{0.15in}
        \includegraphics[width=\linewidth]{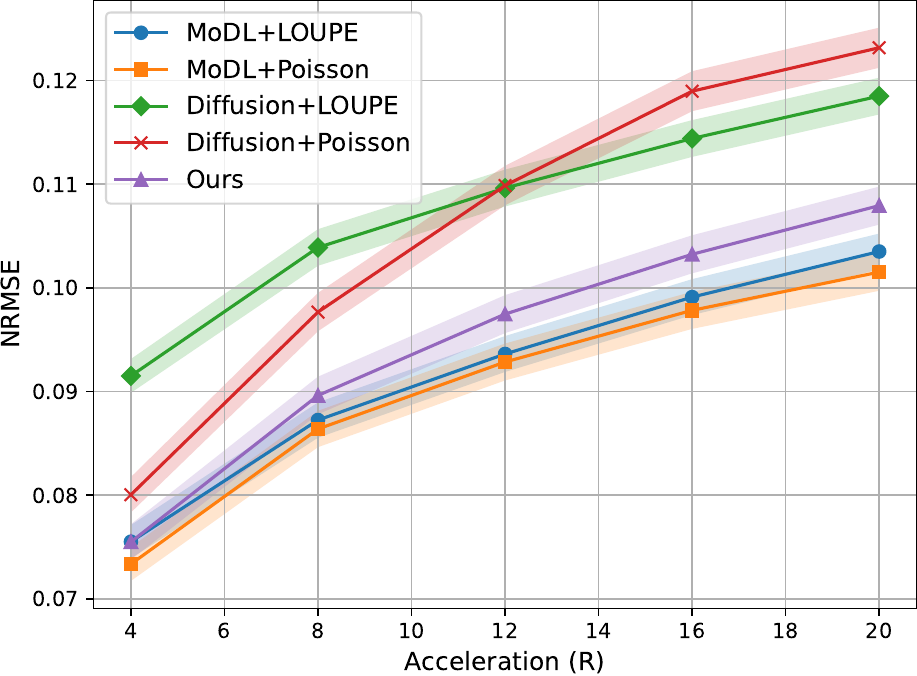}
        \caption{All Knee}
        \label{fig:combined_metrics}
    \end{subfigure}
    \caption{\textbf{Test metrics for different methods across various anatomies and acceleration factors.} We compare the mean SSIM, PSNR, and NRMSE for reconstructions of slices from the test set using our method vs. baselines. We present results for acceleration factors $R$ in $\{4, 8, 12, 16, 20\}$ for (a) T2 brain, (b) PD knee, (c) PDFS knee, and (d) combined PD and PDFS knee data. The shaded areas indicate $\pm 1$ standard error. Across metrics, accelerations, and anatomy, our diffusion-based reconstruction with optimized sampling patterns outperforms the baseline variable-density Poisson Disc sampling and multi-coil LOUPE diffusion-based reconstructions. As the goal of this work is not to evaluate the performance of diffusion versus end-to-end reconstruction methods, the MoDL metrics are just included as reference points for contextualization.}
    \label{fig:metrics}
\end{figure*}

\subsection{Comparing Sampling Patterns}

Next, we compare our learned patterns for diffusion reconstruction to baseline patterns visually for $R=4, 8,$ and $16$ for T2 brain data in Fig.~\ref{fig:brain_patterns} and PDFS knee data in Fig.~\ref{fig:pdfs_patterns}. Our patterns adapt to the data distribution, with distinct spreads of samples between the brain and knee patterns.
Due to our greedy selection process, the points in our learned patterns maintain reasonable separation from each other and avoid clustering, while still densely sampling informative central regions of k-space. In contrast, patterns learned with LOUPE show many regions of clustered k-space points that may capture redundant information. Further, LOUPE patterns trained on PDFS knee data exhibit artifacts; portions of k-space along the central vertical axis are very sparsely sampled. We carefully verify that these artifacts are not caused by implementation errors or discrepancies in experiment code between our method and LOUPE. Finally, our patterns adapt gracefully to different acceleration factors, distributing sampling points more evenly across k-space with increasing $R$ while still favoring informative regions.

\begin{figure}[!t]
    \centering
    \includegraphics[width=0.99\linewidth]{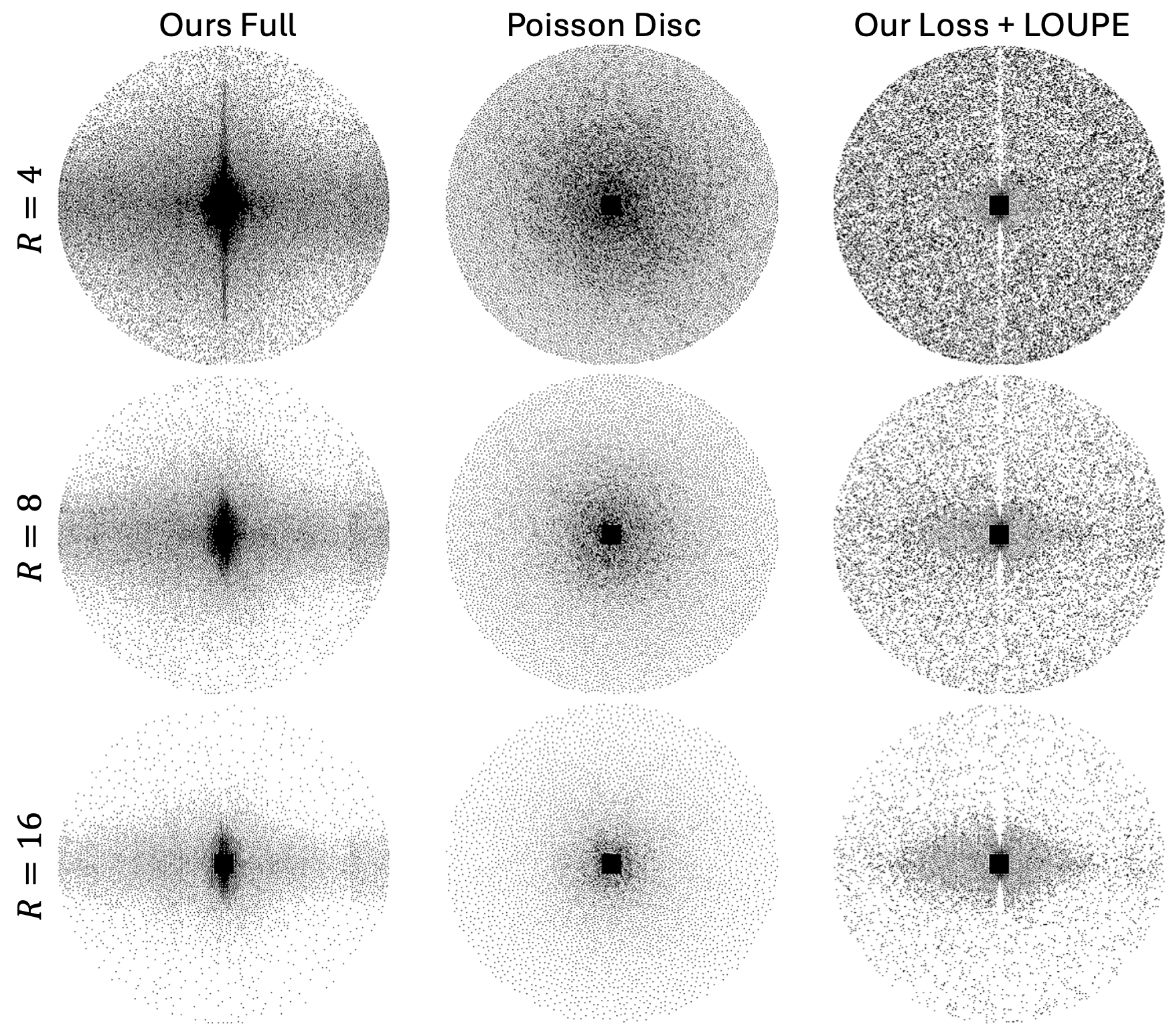}
    \caption{\textbf{Sampling patterns from different methods for PDFS knee data.} Sampling patterns used for reconstructing slices at acceleration factors $R$ in $\{4, 8, 16\}$ with the diffusion-based methods. Poisson Disc sampling tends to distribute samples more uniformly at high acceleration rates, while LOUPE concentrates many samples near the center of k-space. The proposed method balances these effects by maintaining spacing between samples while preferentially allocating measurements to informative central regions.}
    \label{fig:pdfs_patterns}
\end{figure}

\subsection{Effect of Contrast}
\label{sec:contrast}

Here, we analyze the effects of learning sampling patterns with different contrasts for the same anatomy. We train our method on knee data with PD, PDFS, and combined PD and PDFS contrasts for $R=4, 8,$ and $16$ and visualize the learned sampling patterns in Fig.~\ref{fig:knee_comparison_patterns}. At low acceleration, $R=4$, the pattern learned on PDFS data samples low-frequency components more densely than the pattern for PD data, particularly along the vertical and horizontal axes about the origin. PDFS data have a lower signal-to-noise ratio than PD data, as indicated by the worse overall PSNR and NRMSE values in the plots in Fig.~\ref{fig:metrics}. This suggests that our proposed method is correctly learning to concentrate on the most informative regions of k-space for PDFS data while sampling the noisier high-frequency components more sparsely. At $R=8$ and $16$, the patterns learned on each contrast begin to converge in appearance, with the PDFS pattern still samples low-frequency components more densely than the PD pattern. The patterns learned on the combined contrast data appear to be a visual average between the patterns learned on each contrast separately, indicating that our proposed technique properly adapts to the combined data. 

\begin{figure}[!t]
    \centering
    \includegraphics[width=0.99\linewidth]{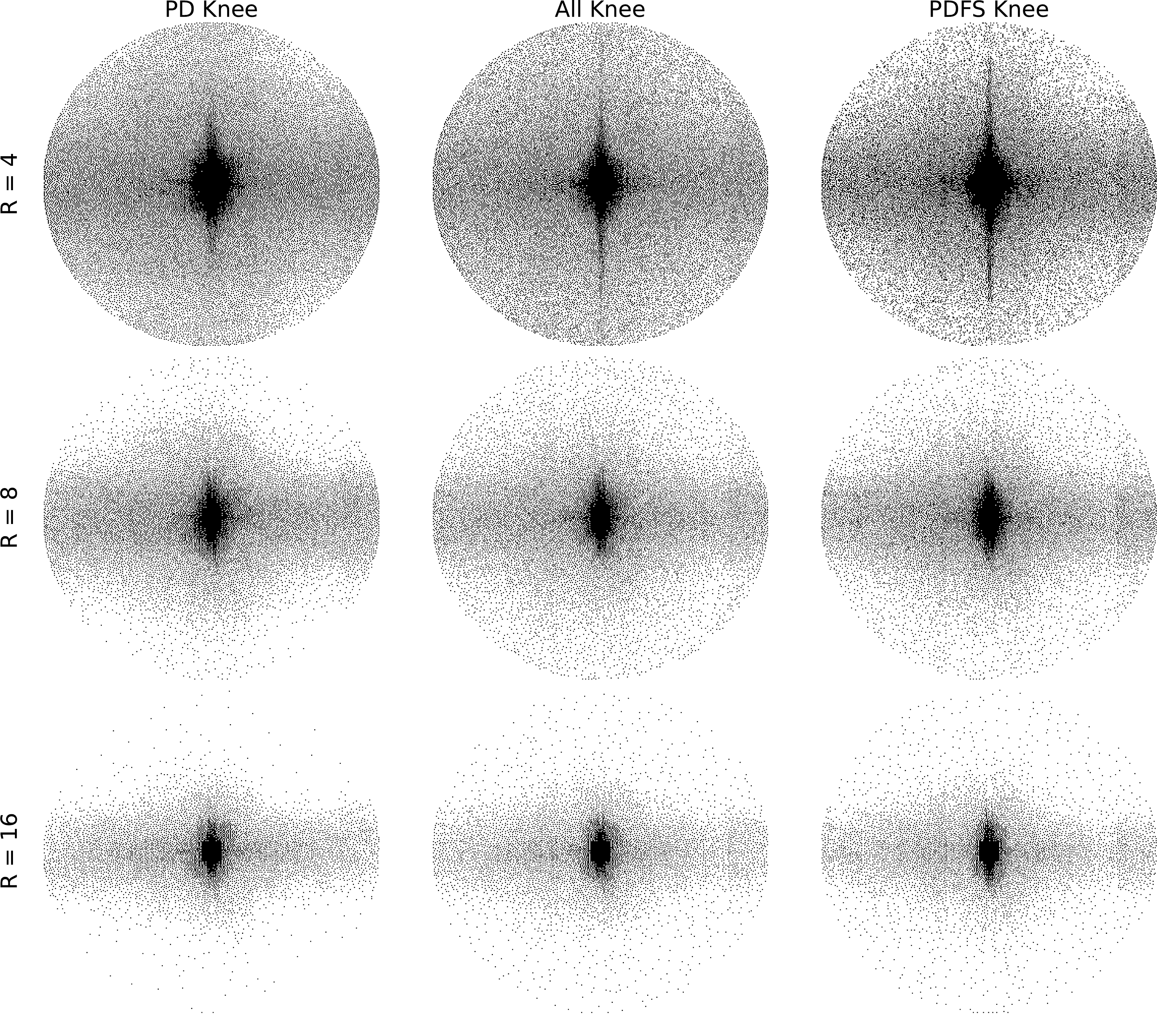}
    \caption{\textbf{Effects of different contrast on patterns learned with our method.} We train sampling patterns using our method on PD and PDFS knee slices separately, as well as on both contrasts combined. We present the learned patterns for acceleration factors $R$ in $\{4, 8, 16\}$. Our method adapts to changes in data distribution caused by shifts in contrast, even for the same anatomy.}
    \label{fig:knee_comparison_patterns}
\end{figure}

\subsection{Effect of Top-K Parameter}

Finally, in Fig.~\ref{fig:brain_patterns_varyingk} we plot sampling patterns learned on T2 brain data using our method with varying $K$ at $R=4, 8,$ and $16$. As we expect, increasing $K$ leads to more separation between neighboring k-space samples and a more uniform distribution of overall sampling points. By interpolating between pure greedy sample selection at $K=1$ and distance-based selection at higher values of $K$, we can finely control the tradeoff between structure and randomness in our learned patterns. We present further results on the effects of varying the top-K parameter in the Appendix.  

\begin{figure*}[!t]
    \centering
    \includegraphics[width=0.7\linewidth]{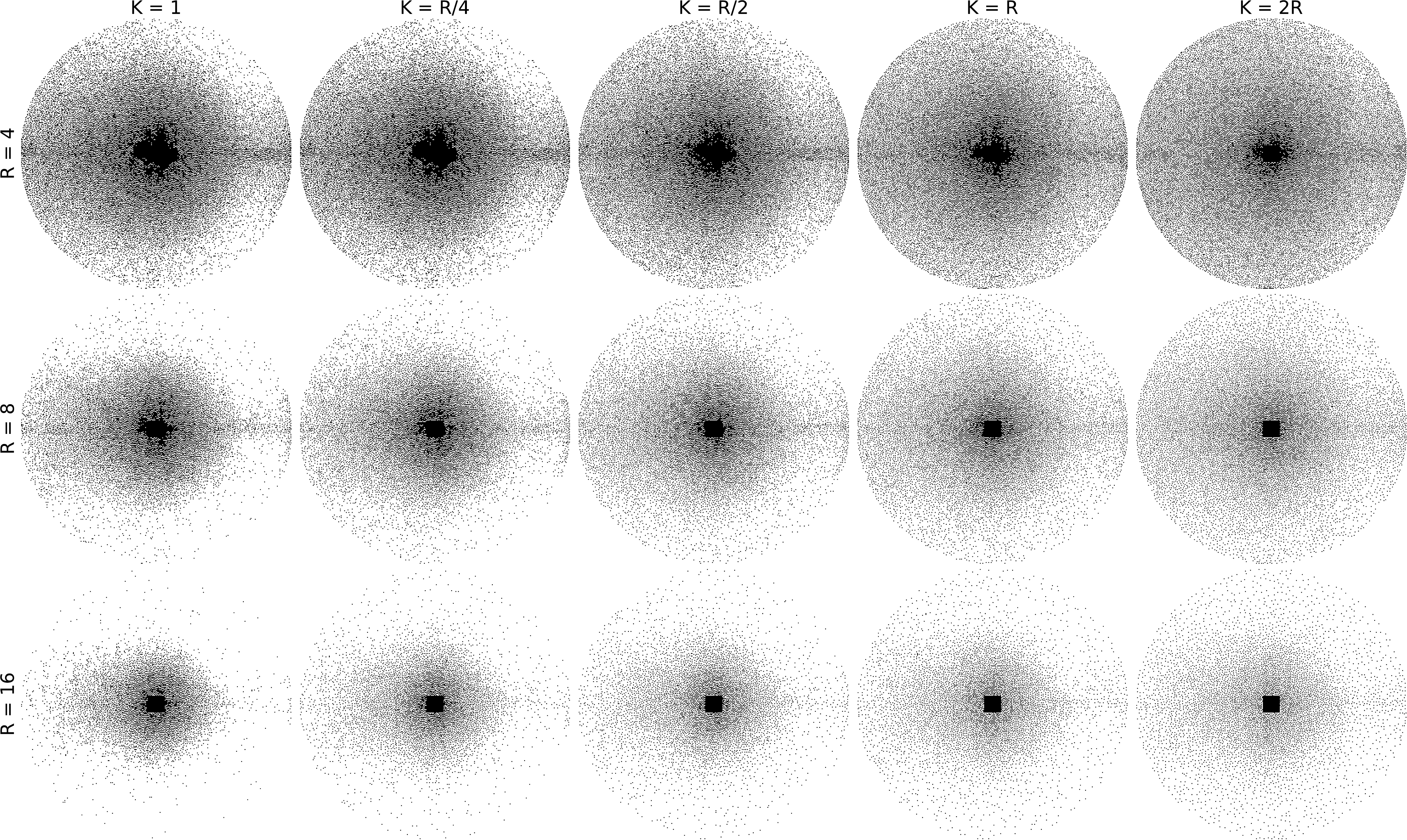}
    \caption{\textbf{Effects of varying the top-K parameter on patterns learned with our method.} We present sampling patterns learned using our method on T2 brain slices for acceleration factors $R$ in $\{4, 8, 16\}$ and top-K parameter $K$ in $\{1, \frac{R}{4}, \frac{R}{2}, R, 2R\}$. Increasing $K$ leads to patterns with greater distance between neighboring k-space samples.}
    \label{fig:brain_patterns_varyingk}
\end{figure*}

\section{Discussion}

This work proposes a framework for optimizing k-space sampling patterns for diffusion-based MRI reconstruction. The first contribution is a training objective that enables backpropagation of gradients with respect to the sampling pattern in the setting of diffusion-model reconstruction, where the computational burden of posterior sampling typically prohibits differentiation. Then, we introduce a greedy sampling pattern optimization algorithm that exploits our objective to iteratively select k-space locations. An additional practical advantage of the proposed approach is that sampling patterns are optimized using a frozen, pre-trained diffusion prior, allowing the method to be flexibly applied across acquisition settings without retraining the reconstruction model or overfitting the prior to a specific sampling mask, as commonly occurs in joint end-to-end optimization.

Across experiments, the proposed sampling patterns consistently outperform both variable-density Poisson Disc sampling and multi-coil LOUPE when used with diffusion-based reconstruction. Poisson Disc patterns were originally designed to promote sparsity-based reconstruction, while LOUPE was developed for unrolled, end-to-end networks with joint training of reconstruction and sampling. In contrast, our method maintains structure by selecting candidate k-space samples based on their gradient values while also choosing the candidate that is furthest away from the existing samples. As shown in Figs. \ref{fig:brain_patterns} and \ref{fig:pdfs_patterns}, Poisson Disc sampling tends to distribute samples more uniformly at high acceleration rates, which can lead to insufficient coverage of informative low-frequency regions, while LOUPE concentrates many samples near the center of k-space, potentially resulting in redundant measurements. The proposed method balances these effects by maintaining spacing between samples while preferentially allocating measurements to informative central regions. Furthermore, the learned patterns adapt to contrast and anatomy (Fig. \ref{fig:knee_comparison_patterns}).

In several cases, NRMSE and PSNR achieved by the proposed diffusion-based approach are similar to or slightly worse than those obtained by end-to-end MoDL reconstructions. However, the goal of this work is not to compare diffusion-based reconstruction against end-to-end methods, but rather to establish an effective sampling optimization strategy for diffusion models and demonstrate improvements over prior diffusion-based baselines. Moreover, MoDL is explicitly trained to minimize NRMSE, and we observe that MoDL reconstructions are often over-smoothed and exhibit loss of fine structural detail. This is reflected by our method achieving higher SSIM than either MoDL-based baseline for all but PD knee data, where MoDL+Poisson performs similarly. A deeper investigation of the perception–distortion tradeoff \cite{blau2018perception} between diffusion-based and end-to-end reconstructions is an important direction for future work but is beyond the scope of this study.

Interestingly, we observe that at higher acceleration rates, both our method and LOUPE allocate a larger fraction of samples to the center of k-space. Recent work on MRI denoising has shown that, under a fixed scan-time budget, acquiring lower-resolution, higher-SNR measurements and denoising them can be more effective than attempting to de-noise noisy high-resolution data \cite{wang2025welldesigned}. We speculate that a similar phenomenon may be occurring in the reconstruction setting: the optimizer appears to favor concentrating measurements in low-frequency regions, effectively biasing toward lower-resolution but more reliable observations, and then relying on the reconstruction model to recover fine-scale structure. Thoroughly characterizing this effect for learned reconstruction and sampling pattern optimization is an interesting direction for future research.

A limitation of the proposed approach is the requirement of fully sampled datasets, both for training the diffusion model and optimizing sampling patterns. This constraint is shared by many learning-based reconstruction methods, but could potentially be relaxed using approaches that operate on partially observed data, like Ambient Diffusion \cite{aali2024ambient}. Additionally, while the proposed objective could, in principle, be applied to end-to-end reconstructions for joint learning of sampling and reconstruction, doing so may be ineffective in practice due to the instability of early training gradients when the weights of reconstruction model have not converged and the irreversible nature of the greedy sample selection process.

Future extensions of our work could combine techniques for learning sampling pattern distributions with our greedy selection scheme. For example, we could learn a Bernoulli probability for each k-space location in a primary training phase, then apply our method in a second phase, using the learned probabilities to inform our choice of candidate k-space points instead of picking them uniformly at random. Another direction could be to parameterize a higher-dimensional sampling operator that explicitly models, for example, interactions between different coils or k-space locations.  

\section{Conclusion}

We proposed a method to learn sampling patterns for MRI reconstruction using diffusion-based generative models and posterior sampling. To alleviate computational and memory expense during training, we derived an extension to Tweedie's formula that provides a closed-form expression for the mean of the posterior distribution, leading us to a simplified training objective. We then presented a technique for learning sampling patterns using a greedy two-step process that allows the user to tune the tradeoff between structured and non-structured sampling. Empirical evaluations demonstrated that our method outperforms baselines diffusion-based methods for reconstructing MRI data across various anatomies, contrasts, and acceleration factors. 

\appendix    
\clearpage

\section{Proof of \Cref{prop:tweedie}}
\label{sec:appendix_tweedie_proof}

\tweedie*

\begin{proof}
We begin by representing the distribution $p_t(\x_t | \y)$ as marginalizing out $\x_0$ conditioned on $\y$:
    \begin{align*}
        p_t(\x_t | \y) = \int_{\x_0} p_t(\x_t | \x_0, \y) p_0(\x_0 | \y) d\x_0.
    \end{align*}
Next, we take the gradient w.r.t. $\x_t$ on both sides:
    \begin{align*}
       \nabla_{\x_t} p_t(\x_t | \y) &= \nabla_{\x_t} \int_{\x_0} p_t(\x_t | \x_0, \y) p_0(\x_0 | \y) d\x_0 \\
                                    &= \int_{\x_0} p_0(\x_0 | \y) \nabla_{\x_t} p_t(\x_t | \x_0, \y) d\x_0 \\
                                    &= \int_{\x_0} \big{[}p_0(\x_0 | \y) p_t(\x_t | \x_0, \y) \\ &\qquad \qquad  \nabla_{\x_t} \log p_t(\x_t | \x_0, \y) \big{]} d\x_0.
    \end{align*}
On the last line, we use the identity $\nabla_x \log f(x) = \nabla_x f(x) / f(x)$. We note that since $p_t(\x_t | \x_0, \y) = p_t(\x_t | \x_0)$, and $p_t(\x_t | \x_0)$ is Gaussian, $p_t(\x_t | \x_0, \y)$ has non-zero value everywhere and we avoid singularities from the denominator. 

Continuing, we use the conditional independence of $\x_t$ and $\y$ given $\x_0$ to replace $\nabla_{\x_t} \log p_t(\x_t | \x_0, \y)$ on the right-hand side with $\nabla_{\x_t} \log p_t(\x_t | \x_0)$ and obtain:
    \begin{align*}
        \nabla_{\x_t} p_t(\x_t | \y) &= \int_{\x_0} p_0(\x_0 | \y) p_t(\x_t | \x_0, \y) \nabla_{\x_t} \log p_t(\x_t | \x_0) d\x_0 \\
                                     &= \int_{\x_0} p_0(\x_0 | \y) p_t(\x_t | \x_0, \y) \bigg(\frac{\x_0 - \x_t}{\sigma^2_t}\bigg) d\x_0.
    \end{align*}
Here, we use the fact that $p_t(\x_t | \x_0) = \cN(\x_t; \x_0, \sigma^2_t \I_n)$ is a Gaussian and replace the score function $\nabla_{\x_t} \log p_t(\x_t | \x_0)$ by its exact value, $(\x_0 - \x_t)/\sigma^2_t$. 

Expanding the right-hand-side, we get:
    \begin{align*}
        \nabla_{\x_t} p_t(\x_t | \y) &= \frac{1}{\sigma^2_t} \bigg[\int_{\x_0} p_0(\x_0 | \y) p_t(\x_t | \x_0, \y) \x_0 d\x_0 - \\
        &\qquad \qquad \int_{\x_0} p_0(\x_0 | \y) p_t(\x_t | \x_0, \y) \x_t d\x_0 \bigg] \\
                                     &= \frac{1}{\sigma^2_t} \bigg[\int_{\x_0} p_0(\x_0 | \y) p_t(\x_t | \x_0, \y) \x_0 d\x_0 - \\ & \qquad \qquad\x_t \int_{\x_0} p_0(\x_0 | \y) p_t(\x_t | \x_0, \y) d\x_0 \bigg] \\
                                     &= \frac{1}{\sigma^2_t} \bigg[\int_{\x_0} p_0(\x_0 | \y) p_t(\x_t | \x_0, \y) \x_0 d\x_0 - \\&\qquad \qquad \x_t p_t(\x_t | \y) \bigg]. 
    \end{align*}
In the previous line, we marginalize out $\x_0$ conditioned on $\y$ as in the first line of the proof to recover $p_t(\x_t | \y)$. 

Next, we observe that Bayes' rule tells us $p_0(\x_0 | \y) p_t(\x_t | \x_0, \y) = p_t(\x_t | \y) p_0(\x_0 | \x_t, \y)$ and replace the former quantity by the latter on the right-hand side:
    \begin{align*}
        \nabla_{\x_t} p_t(\x_t | \y) &= \frac{1}{\sigma^2_t} \bigg[\int_{\x_0} p_t(\x_t | \y) p_0(\x_0 | \x_t, \y) \x_0 d\x_0 - \\&\qquad \qquad \x_t p_t(\x_t | \y) \bigg] \\
                                     &= \frac{1}{\sigma^2_t} \bigg[ p_t(\x_t | \y) \E[\x_0 | \x_t, \y] - \x_t p_t(\x_t | \y) \bigg] \\
                                     &= \frac{p_t(\x_t | \y)}{\sigma^2_t} \bigg[ \E[\x_0 | \x_t, \y] - \x_t \bigg] \\
        \frac{\nabla_{\x_t} p_t(\x_t | \y)}{p_t(\x_t | \y)} &= \frac{1}{\sigma^2_t} \bigg[ \E[\x_0 | \x_t, \y] - \x_t \bigg].
    \end{align*}
We note that from our assumption, $p_t(\x_t | \y)$ is fully supported everywhere, so we avoid singularities when dividing by this quantity. 

Finally, we again invoke the identity $\nabla_x \log f(x) = \nabla_x f(x) / f(x)$ to rewrite the left-hand side and rearrange to obtain the desired result:
    \begin{align*}
        \nabla_{\x_t} \log p_t(\x_t | \y) &= \frac{1}{\sigma^2_t} \bigg[ \E[\x_0 | \x_t, \y] - \x_t \bigg] \\
        \E[\x_0 | \x_t, \y] &= \x_t + \sigma^2_t \nabla_{\x_t} \log p_t(\x_t | \y).
    \end{align*}
\end{proof}

\section{Further Experiments for Effects of Varying the Top-K Parameter}
\label{sec:appendix_topk}

\begin{figure*}[t!] 
    \centering
    \begin{subfigure}[b]{0.23\linewidth}
        \centering
        \includegraphics[width=\linewidth]{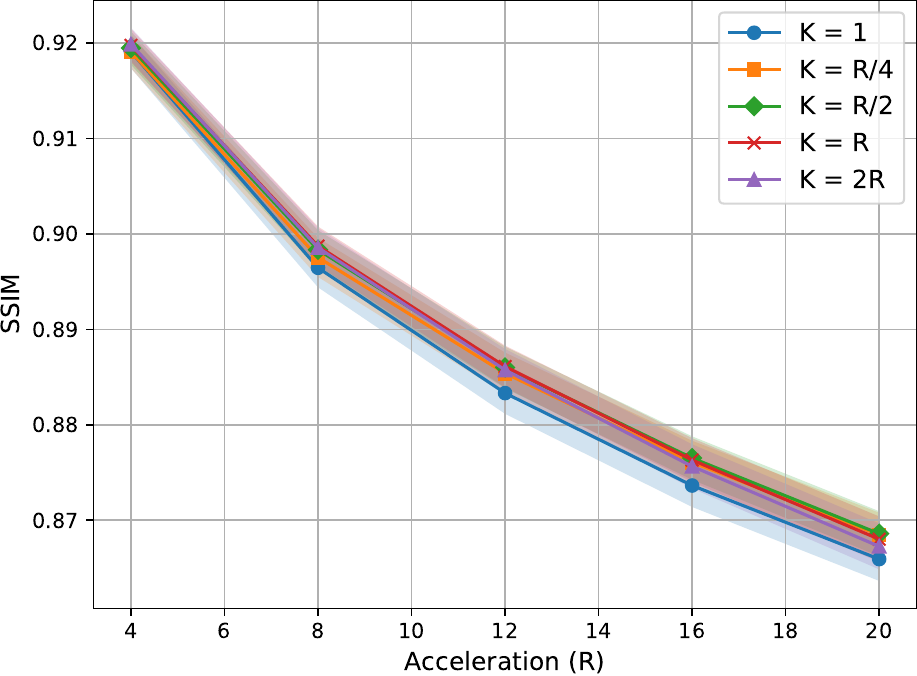}\vspace{0.15in}
        \includegraphics[width=\linewidth]{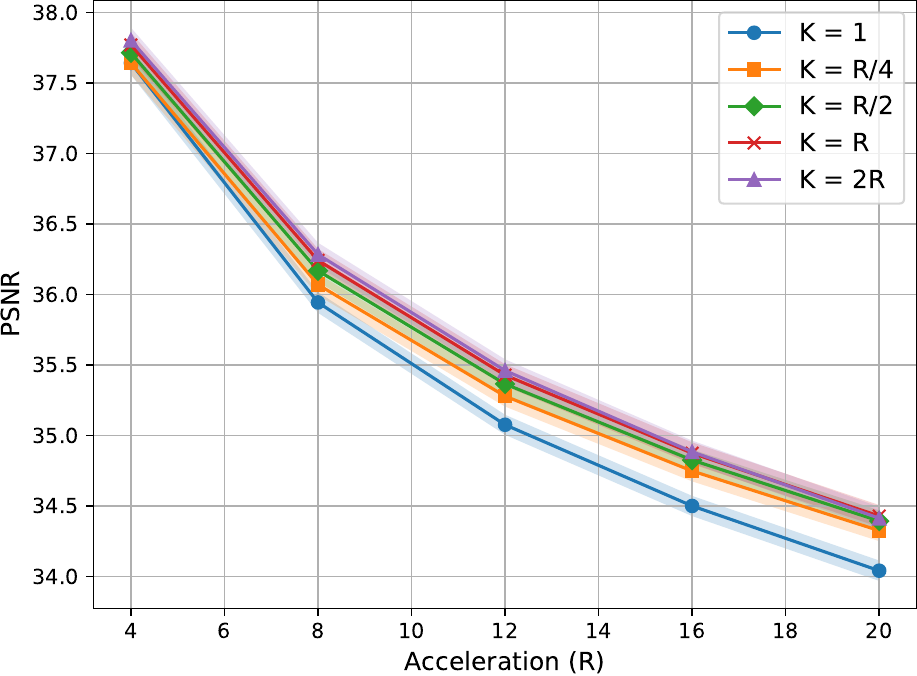}\vspace{0.15in}
        \includegraphics[width=\linewidth]{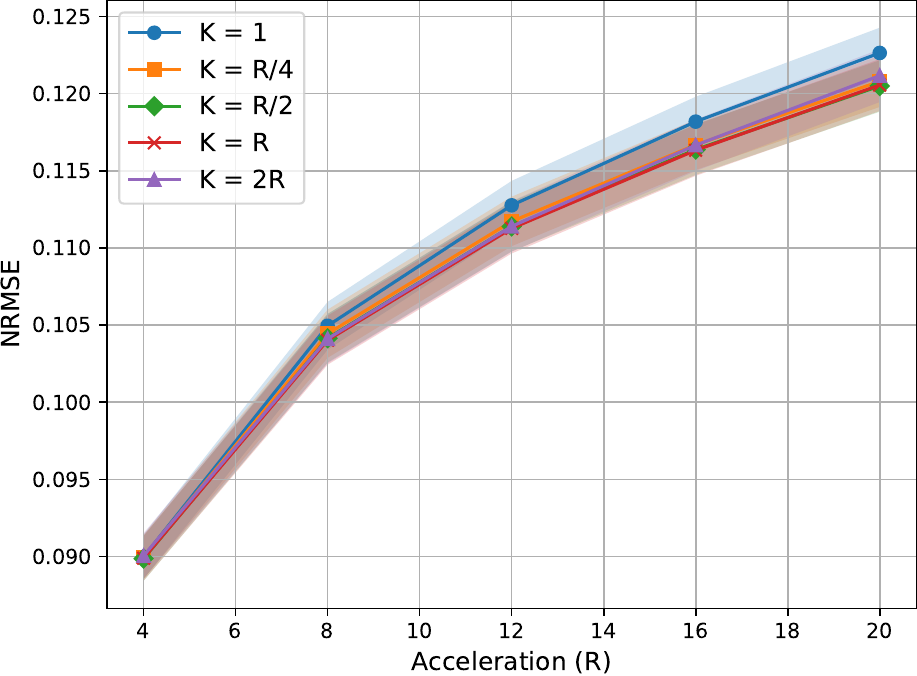}
        \caption{T2 Brain}
        \label{fig:brain_metrics_varyingk}
    \end{subfigure}
    \hfill
    \begin{subfigure}[b]{0.23\linewidth}
        \centering
        \includegraphics[width=\linewidth]{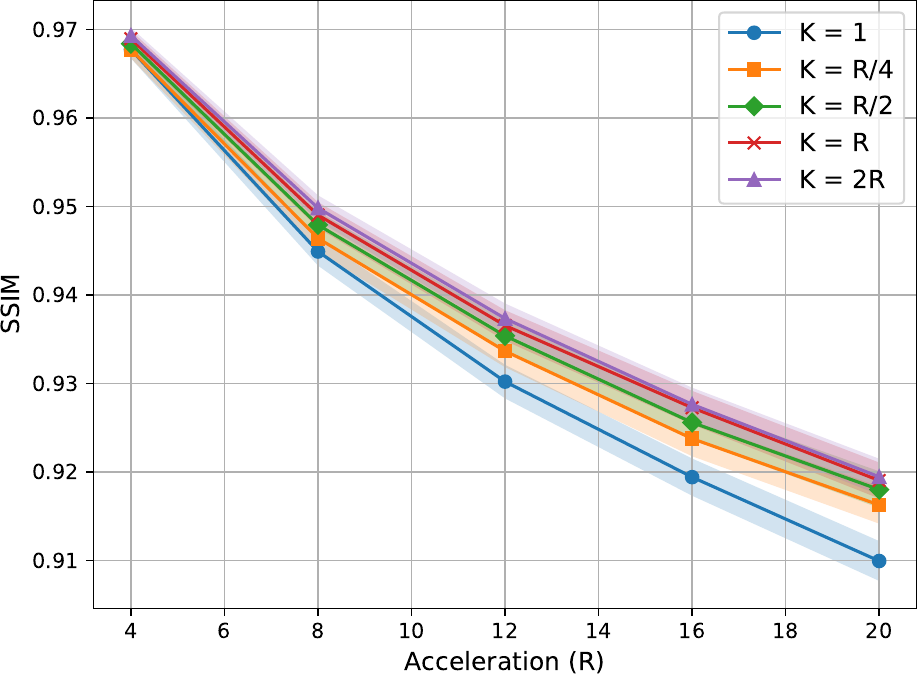}\vspace{0.15in}
        \includegraphics[width=\linewidth]{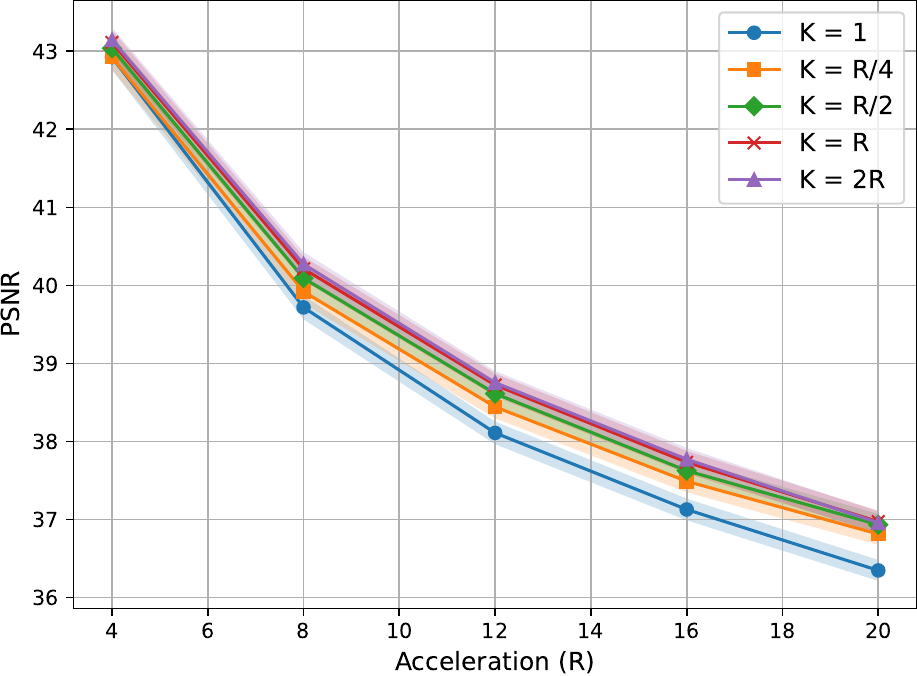}\vspace{0.15in}
        \includegraphics[width=\linewidth]{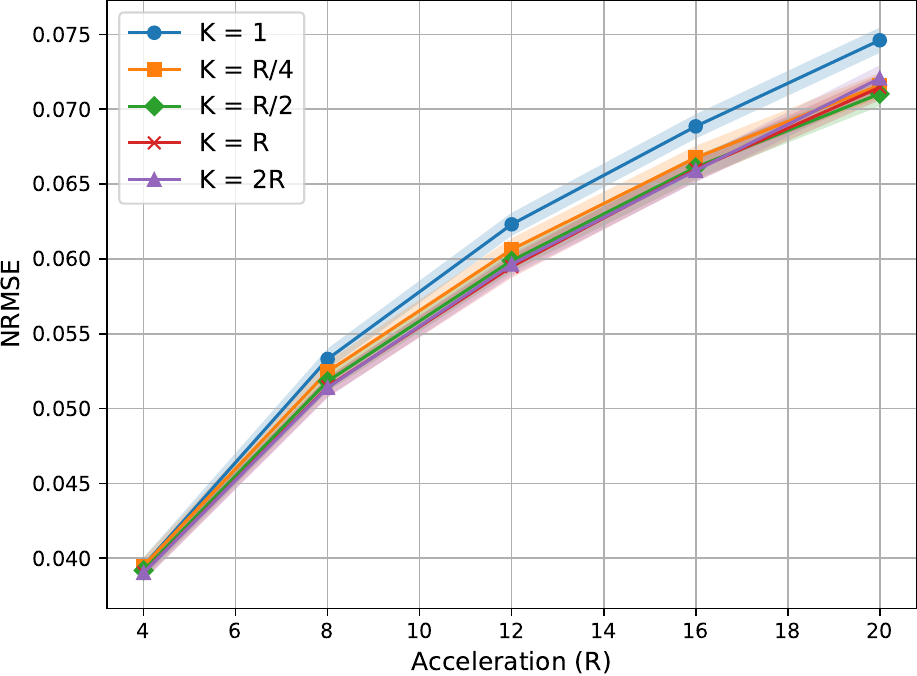}
        \caption{PD Knee}
        \label{fig:pd_metrics_varyingk}
    \end{subfigure}
    \hfill
    \begin{subfigure}[b]{0.23\linewidth}
        \centering
        \includegraphics[width=\linewidth]{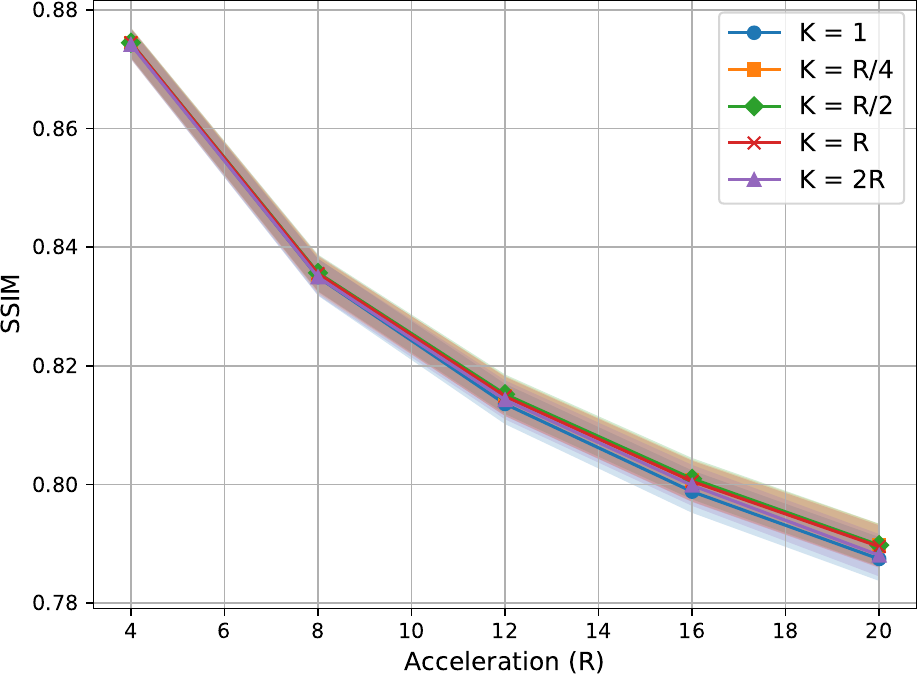}\vspace{0.15in}
        \includegraphics[width=\linewidth]{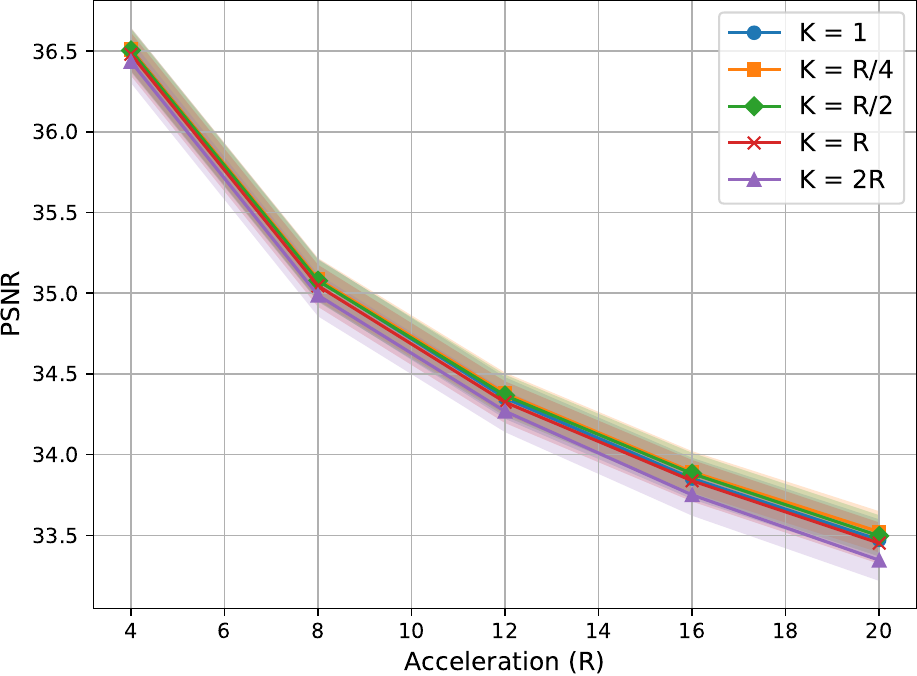}\vspace{0.15in}
        \includegraphics[width=\linewidth]{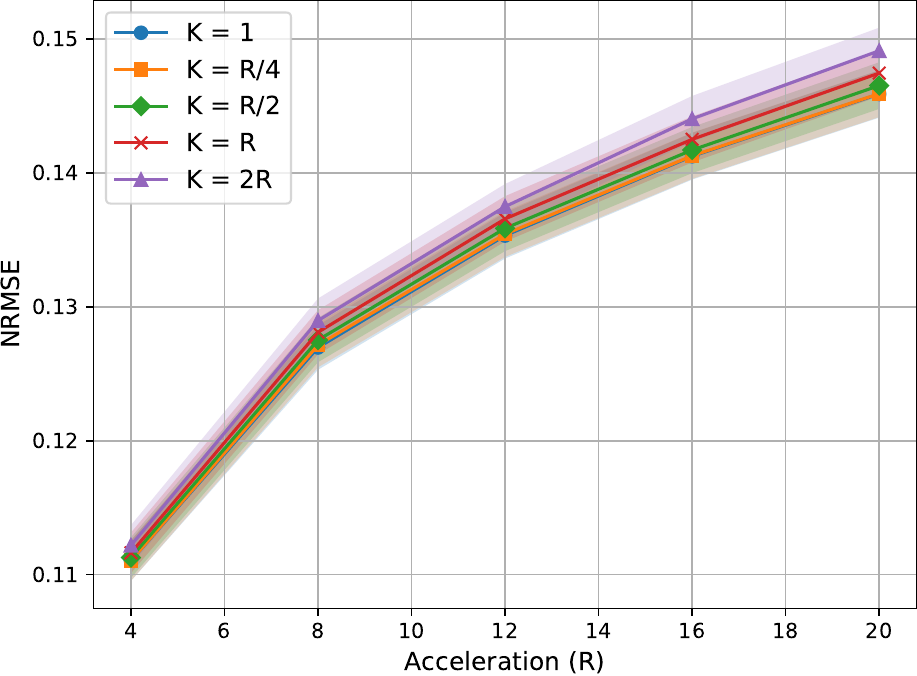}
        \caption{PDFS Knee}
        \label{fig:pdfs_metrics_varyingk}
    \end{subfigure}
    \hfill
    \begin{subfigure}[b]{0.23\linewidth}
        \centering
        \includegraphics[width=\linewidth]{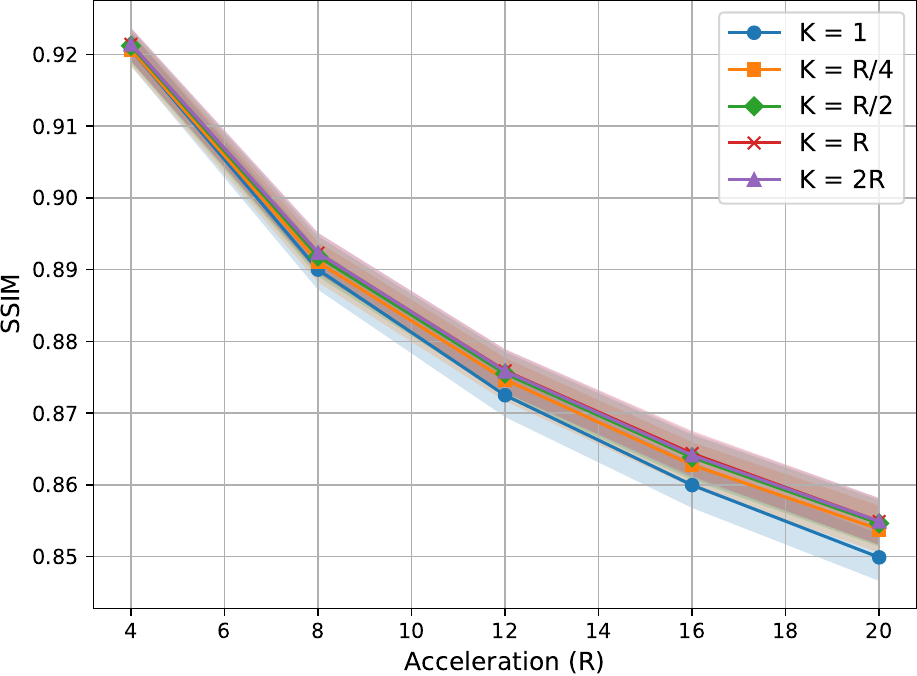}\vspace{0.15in}
        \includegraphics[width=\linewidth]{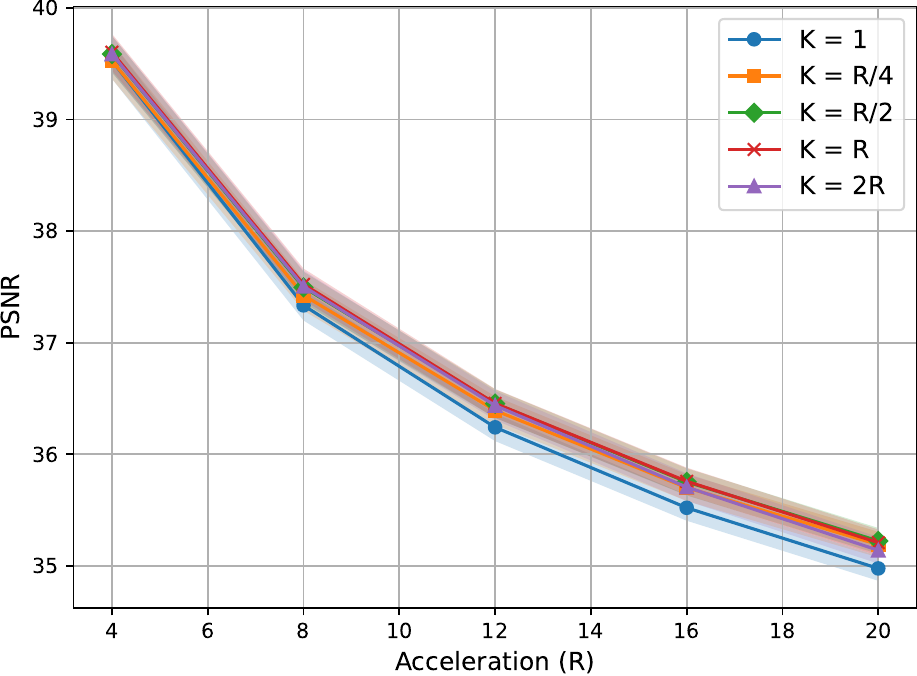}\vspace{0.15in}
        \includegraphics[width=\linewidth]{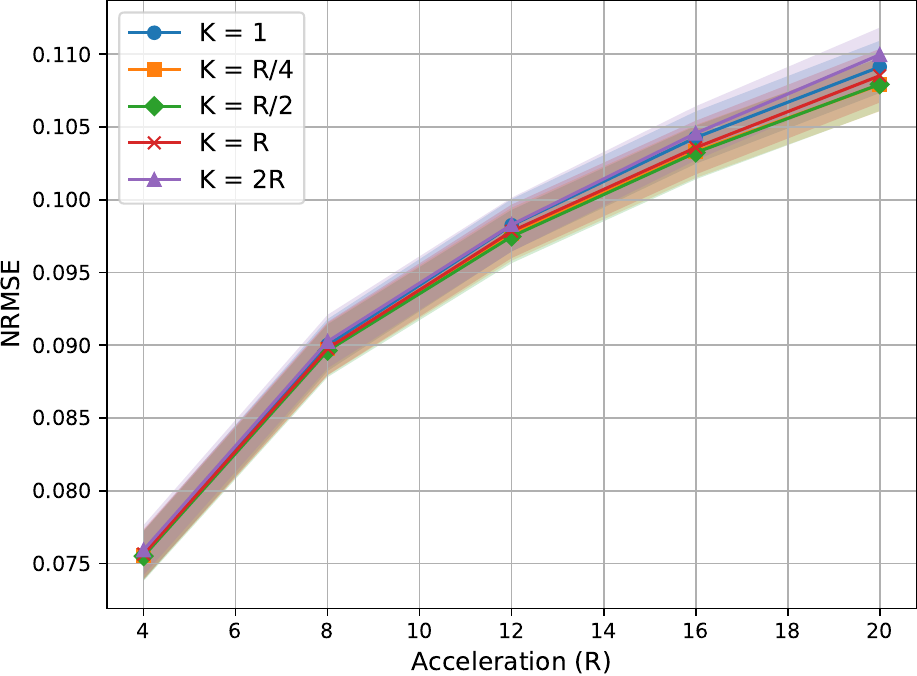}
        \caption{All Knee}
        \label{fig:all_metrics_varyingk}
    \end{subfigure}
    
    \caption{\textbf{Test metrics for our method with a varying top-K parameter.} We compare the mean SSIM \cite{ssim}, PSNR, and NRMSE for reconstructions of slices from the test set using our method. We present results for acceleration factors $R$ in $\{4, 8, 12, 16, 20\}$ and top-K parameter $K$ in $\{1, \frac{R}{4}, \frac{R}{2}, R, 2R\}$ for (a) T2 brain, (b) PD knee, (c) PDFS knee, and (d) combined PD and PDFS knee data. The shaded areas indicate $\pm 1$ standard error.}
    \label{fig:metrics_varyingk}
\end{figure*}

\noindent In this section, we run experiments to analyze the effect of varying the top-K parameter in our method. We train sampling patterns at acceleration factors in $\{4, 8, 12, 16, 20\}$ across all anatomies and contrasts, while varying $K$ within the values $\{1, \frac{R}{4}, \frac{R}{2}, R, 2R\}$. We plot the reconstruction metrics for the test set in Fig.~\ref{fig:metrics_varyingk}. It is clear that the top-K parameter is necessary to regularize the selected k-space points during sampling. Pure greedy selection of sampling points based on the training gradients (i.e., $K=1$) leads to worse performance in each metric in nearly every experiment when compared to larger $K$. The exception to this observation is the experiments on PDFS knee data, which indicate that the highest $K$ values ($K=2R$) lead to worse performance. This is in line with our observations in Sec.~\ref{sec:contrast}, wherein we saw that since PDFS knee data are relatively noisy, patterns learned on this data benefit from sampling low-frequency regions more densely - and thus benefit from a relatively smaller top-K parameter. Overall, the reconstruction quality of each data distribution varies differently with different top-K values, with some (e.g., T2 brain and PD knee) data showing a larger dependence on $K$ and some (e.g., combined knee) showing smaller dependence. 

\section{Posterior Sampling Algorithm}
\label{sec:appendix_algorithm}

We present our posterior sampler, based on DPS \cite{chung2023diffusion}, in Algorithm~\ref{alg:posterior}. We modify the stochastic sampling algorithm from EDM \cite{Karras2022edm} to have no second-order correction and add a log-likelihood step.

\begin{algorithm}[t]
\setstretch{1.1}
\caption{Posterior sampling}
\label{alg:posterior}
\begin{algorithmic}[1]
\Require $\s_{\mathbf{\theta}}(\x_t, t),\: \sigma_{t \in \{t_N, \dots, t_0\}},\: S_{churn}, \rho_{dps}, \y$
\State \textbf{sample} $\x_N \sim \cN(\mathbf{0}, \sigma_{t_N}^2 \I)$ \:\:\:\textcolor{lightgray}{// random starting latent}
\For{$i \in \{N, \dots, 1\}$}
    \State \textbf{sample} $\mathbf{z}_i \sim \cN(\mathbf{0}, \I)$
    \State $\alpha_i \leftarrow min(S_{churn}/N, \sqrt{2}-1)$
    \State $\hat{\sigma}_{t_i} \leftarrow \sigma_{t_i} + \alpha_i \sigma_{t_i}$ \:\:\:\textcolor{lightgray}{// temporarily increase noise level}
    \State $\hat{\x}_i \leftarrow \x_i + \sqrt{\hat{\sigma}_{t_i}^2 - \sigma_{t_i}^2} \mathbf{z}_i$ \:\:\:\textcolor{lightgray}{// inject fresh noise}
    \State $\hat{\x}_0(\hat{\x}_i) \leftarrow \hat{\x}_i + \hat{\sigma}_{t_i}^2 \s_{\mathbf{\theta}}(\hat{\x}_i, \hat{t}_i)$ \:\:\:\textcolor{lightgray}{// Tweedie's formula}
    \State $\hat{\x}_i' \leftarrow \hat{\x}_i + (\hat{\sigma}_{t_i} - \sigma_{t_{i-1}}) \hat{\sigma}_{t_i} \s_{\mathbf{\theta}}(\hat{\x}_i, \hat{t}_i)$ \:\:\:\textcolor{lightgray}{// prior}
    \State $\x_{i-1} \leftarrow \hat{\x}_i' - \rho_{dps} \nabla_{\hat{\x}_i} \norm{\cA(\hat{\x}_0(\hat{\x}_i)) - \y}^2$ \:\:\textcolor{lightgray}{// likelihood}
\EndFor
\State \textbf{return} $\x_0$
\end{algorithmic}
\end{algorithm}

\section{Data Pre-processing Details}
\label{sec:appendix_data}

We pre-processed the raw data by first reducing the FOV in the read out direction by a factor of two and performing noise whitening on each volume using pixels void of anatomical features across all coils. We estimated coil sensitivity maps from the whitened k-space data using ESPIRiT \cite{uecker2014espirit}. Next, we calculated the minimum variance unbiased estimator (MVUE) images using the sensitivity maps and the whitened k-space data. These MVUE images serve as our fully sampled reference during training and testing. Further, we remove the last two slices from each brain volume and the first ten slices from each knee volume since these are highly noisy and lack substantial anatomy. 

We emphasize that at no stage of pre-processing, training, or testing do we crop or pad the data, except for reducing the FOV by a factor of 2 in the readout direction (as we describe earlier in this section). This approach is intended to closely simulate real-world conditions and prevent the artificial introduction or removal of information, which would constitute a ``data crime'' \cite{inversecrimes}. However, we note that our retrospective 3D undersampling is performed on data that were inherently scanned using a 2D multislice sequence.

After reducing the readout FOV by a factor of two, the brain data had spatial dimensions of $320\times320$ while the knee data was $320\times368$. For the knee samples, the raw data provided in the initial dataset was zero-padded from $332$ to $368$, presumably to ensure isotropic pixels in final reconstructions. If left unaddressed, this would lead to incorrect acceleration rates in the sampling patterns. We account for padding in the knee data by enforcing that points in the outermost 18 phase-encode (PE) lines on each side of k-space are not sampled in the patterns, and we leave these lines out when calculating the true acceleration factors.

Prior to training and inference for all methods, we normalize the data since they exhibit a wide range of values from sample to sample. For each slice, we reconstruct a low-resolution image using the central $20\times20$ autocalibration region of the multi-coil k-space data along with the previously estimated sensitivity maps. We take the 99th percentile pixel value of the magnitude of the complex-valued low-resolution image to be our normalization factor and divide the k-space and reference data by this factor. We note that the central $20\times20$ autocalibration region is always fully sampled in our experiments, so our normalization method only relies on information that is available at inference time.  

\section*{Disclosures}

The authors declare that there are no financial interests, commercial affiliations, or other potential conflicts of interest that could have influenced the objectivity of this research or the writing of this paper

\section*{Data and Code Availability Statement}

All data for this work was taken from the publicly available fast MRI dataset. Code to optimize the sampling pattern with the full proposed method can be found at: \url{https://github.com/utcsilab/MRI_Diffusion_Patterns}.

\section*{Acknowledgments}

This research has been supported by NSF Grants CCF-2239687 (CAREER), AF 1901292, CNS 2148141, Tripods CCF 1934932, IFML CCF 2019844, and research gifts by Western Digital, Amazon, WNCG IAP, UT Austin Machine Learning Lab (MLL), Cisco, and the Stanly P. Finch Centennial Professorship in Engineering.

\bibliography{references}   
\bibliographystyle{spiejour}   




\listoffigures
\listoftables

\end{spacing}
\end{document}